\documentclass[conference]{IEEEtran}

\usepackage{times}

\usepackage[numbers]{natbib}
\usepackage{multicol}
\usepackage[bookmarks=true]{hyperref}

\usepackage[parfill]{parskip}

\usepackage{amsmath, amsthm}
\usepackage{amsfonts}
\usepackage{soul}
\usepackage{listings}

\usepackage[frozencache=true,cachedir=minted-cache]{minted}

\usepackage[most]{tcolorbox}
\usepackage{booktabs}

\newtheorem{thm}{Theorem}

\newtheorem{assumption}{Assumption}

\newtheorem{problem}{Problem}

\usepackage{multirow}

\usepackage{bibunits}

\usepackage{hyperref}

\newcommand\blfootnote[1]{%
  \begingroup
  \renewcommand\thefootnote{}\footnotetext{#1}%
  \addtocounter{footnote}{-1}%
  \endgroup
}

\begin{document}

% paper title
\title{Contextual Safety Reasoning and Grounding\\ for Open-World Robots
\vspace{-12pt}
}

\author{Zachary Ravichandran$^1$, David Snyder$^1$, Alexander Robey$^2$, Hamed Hassani$^1$, \\ Vijay Kumar$^1$, and George J. Pappas$^1$\\
$^1$University of Pennsylvania, $^2$Carnegie Mellon University 
}

\maketitle

\blfootnote{Correspondance to \texttt{zacravi@seas.upenn.edu}. This work was supported by the DARPA SAFRON grant, under award number HR0011-25-3-0135, the Distributed and Collaborative Intelligent Systems and Technology (DCIST) Collaborative Research Alliance (ARL DCIST CRA W911NF-17-2-0181), Coefficient Giving.  D. Snyder acknowledges the support of the Penn AI Fellowship and Z. Ravichandran acknowledges the support of the NSF Graduate Research Fellowship.}

\begin{abstract}
Robots are increasingly operating in open-world environments where safe behavior depends on context: the same hallway may require different navigation strategies when crowded versus empty, or during an emergency versus normal operations. Traditional safety approaches enforce fixed constraints in user-specified contexts, limiting their ability to handle the open-ended contextual variability of real-world deployment. 
We address this gap via \texttt{CORE}, a safety framework that enables online contextual  reasoning, grounding, and enforcement without prior knowledge of the environment (e.g., maps or safety specifications).
\texttt{CORE} uses a vision-language model (VLM) to continuously reason about context-dependent safety rules directly from visual observations, grounds these rules in the physical environment, and enforces the resulting spatially-defined safe sets via control barrier functions.
We provide probabilistic safety guarantees for \texttt{CORE} that account for perceptual uncertainty, and we demonstrate through simulation and real-world experiments that \texttt{CORE} enforces contextually appropriate behavior in unseen environments, significantly outperforming prior semantic safety methods that lack online contextual reasoning. Ablation studies validate our theoretical guarantees and underscore the importance of both VLM-based reasoning and spatial grounding for enforcing contextual safety in novel settings.
We provide additional resources at \url{https://zacravichandran.github.io/CORE}.

\end{abstract}

\IEEEpeerreviewmaketitle

\section{Introduction}
\label{sec:intro}

% TODO: complexity and diversity in settings and tasks 
As robots perform diverse tasks in complex settings such as homes, offices, and warehouses, notions of safety become increasingly contextual and semantic.
For example, a wet floor sign is not simply an obstacle to be avoided but denotes that the surrounding area may be dangerous; safety compliance requires that a robot understand this cue and act accordingly.
The open-world nature of these settings exacerbates contextual safety challenges. 
A robot is likely to encounter contexts that are unknown \emph{a priori},
so a system designer cannot simply provide a set of precise safety rules upfront.
Rather, the robot must \emph{reason about contexts} encountered in the wild.

Robot safety has typically been operationalized via metric safe sets and specification languages (e.g., signal temporal logic and linear temporal logic~\cite{MalerN04stl, Pnueli77ltl}).
Given such specifications, formalisms such as Control Barrier Functions (CBFs) or Hamilton-Jacobi Reachability (HJB) provably enforce safety~\cite{bansal_hamilton-jacobi_2017, ames_control_2019,robey2020learning, stdcbflars}.
However, safety specifications must be defined by an expert user and require complete \emph{a priori} knowledge on a robot's environment, making them impractical for widespread use in complex open-world settings. 
While recent work addresses operation in partially-known or dynamic environments, these works focus on geometric rather than contextual notions of safety~\cite{das2025robust,cosner2021measurement, yang2025shield}.

An emerging line of research seeks to augment traditional notions of safety with contextual and semantic information. 
These works enable users to describe safety rules in natural language (for example, ``stay away from the kitchen'')~\cite{feng2025wordssafetylanguageconditionedsafety,language_safety_hjb,semantic_manipulation}.
These language instructions are then grounded into some representation---such as a costmap---to define safe sets, and these sets are enforced with traditional safety tools such as HJB.
While these works take a valuable step towards enforcing contextual safety, they still require a user with knowledge of the operational environment to provide safety rules that reference precise semantics in that environment; the robot cannot reason about safety for itself.

\begin{figure}[t!]
    \centering
    \includegraphics[width=0.9\linewidth]{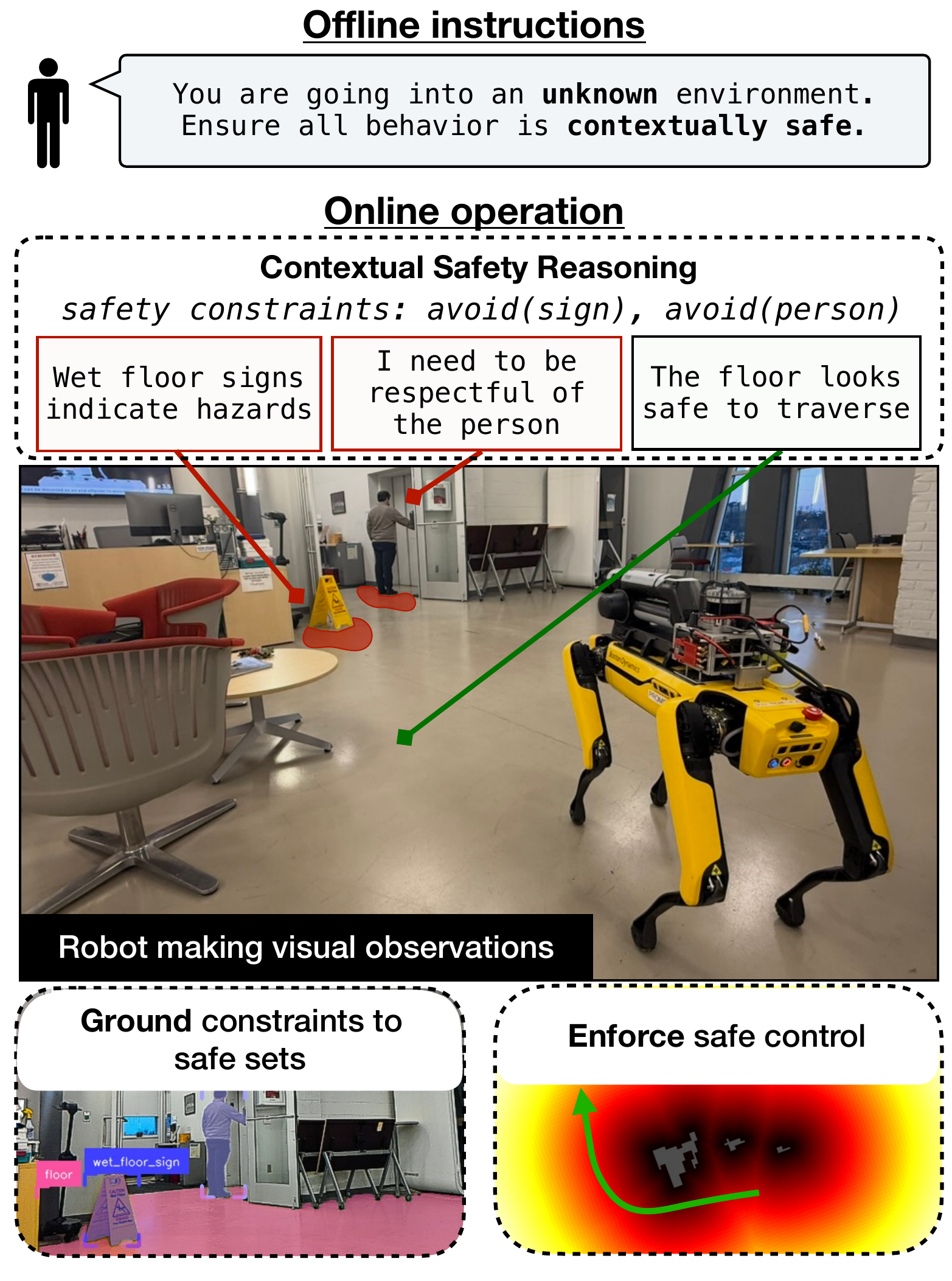}
    \caption{\texttt{CORE} enforces contextual safety via a three-stage process of contextual safety reasoning, semantic grounding, and safe control synthesis, enabling it to operate in open-world environments.}
    \label{fig:intro}
    \vspace{-12pt}
\end{figure}

% TODO: Add Observations I_t, safety constraints s_k etc., 

We address these limitations via \texttt{CORE}: Contextual Safety Reasoning and Enforcement,
a safety framework that operates without \emph{a priori} knowledge of the environment or safety specifications, as illustrated in Fig.~\ref{fig:intro}. 
\texttt{CORE} uses a VLM to reason about context-dependent safety constraints given visual observations from the robot.
It then grounds the inferred constraints in the robot's physical environment to produce spatially-defined safe sets.
While these safe sets may be enforced with a variety of methods, \texttt{CORE} uses CBFs due to their minimal requirements---a dynamics model, a barrier function, and a nominal control input.
Unlike traditional CBFs that assume a known and fixed barrier function, \texttt{CORE} constructs barriers online from noisy perception (due to the VLM inference and semantic grounding) and provides probabilistic safety guarantees despite this uncertainty.
To summarize, this paper makes the following \textbf{contributions}: 
\begin{enumerate} 
    \item [1.] A VLM-based reasoning module that infers contextual safety constraints from visual observations without \emph{a priori} environmental knowledge. \item [2.] A grounding method that maps natural language safety constraints to spatially-defined safe sets. 
    \item [3.] A CBF-based enforcement mechanism with probabilistic safety guarantees under perception uncertainty. 
\end{enumerate} 
We validate these contributions through simulation and hardware experiments, demonstrating that \texttt{CORE} rivals the behavior of an oracle using ground-truth context while preventing unsafe plans nearly five times more than baselines lacking contextual reasoning.
% TODO: quantitative 
The rest of the paper is structured as follows.
\S\ref{sec:related_work} overviews related work,
\S\ref{sec:formulation} provides a problem formulation, \S\ref{sec:method} describes the \texttt{CORE} architecture,
\S\ref{sec:experiments} reports experimental results,
\S\ref{sec:limitations} discusses \texttt{CORE}'s limitations and avenues for future work, and \S\ref{sec:conclusion} concludes.
\section{Related Work}
\label{sec:related_work}

\subsection{Robot Safety Filters}
\label{prior_work_on_control_filters}
Previous work considers CBFs with uncertainty in the state estimate~\cite{cosner2021measurement,dean2021guaranteeing, nanayakkara2025safety} or dynamics model~\cite{das2025robust, das2023robust, cosner2023generativemodelingresidualsrealtime, cosner2023robustsafetystochasticuncertainty}. 
Often, the uncertainty is modeled as a bounded disturbance~\cite{cosner2021measurement, dean2021guaranteeing, nanayakkara2025safety, das2023robust, das2025robust};  this admits robust safety guarantees conditioned on proper specification of the disturbance bound.
Alternatively, some recent works consider unbounded, stochastic uncertainty in (discrete time) system dynamics \cite{cosner2023generativemodelingresidualsrealtime, cosner2023robustsafetystochasticuncertainty}, which yield probabilistic guarantees in the form of $K$-step exit probabilities (i.e., the probability that the system will leave the safe set in $K$ steps).  By contrast, \texttt{CORE} models the uncertainty arising from perception-based barrier construction rather than from state or dynamics estimation.

Several lines of work have considered safety filters in the more general motion planning regime. Hamilton-Jacobi (HJ) Reachability-based methods~\cite{bansal_hamilton-jacobi_2017} construct safety guarantees using an explicit---and often, global---model of worst-case realizations of uncertainty~\cite{chen_fastrack_2021, bajcsy_2019_hj_navigation, Margellos_HJ_multiagent_2009}. These methods suffer from the curse of dimensionality, though efficient approximations exist~\cite{chen_exact_2017, chen_speedup_2016, herbert_hj_speedup_2021}.
Recent efforts extend these methods to encompass and inform settings with  high-dimensional perception and state representations \cite{bansal_deepreach_2021, feng_bridging_2025, nakamura_generalizing_2025}.
However, the computational overhead of these methods makes online implementations challenging. 
Other motion planning methods have built upon model-predictive control to generate near-optimal motion plans subject to robust measures of safety~\cite{Gandhi_robust_MPPI_2021, Yin_ShieldMPPI_2023}, but they strongly couple safe control and planning.
By contrast, \texttt{CORE} is agnostic to the robot's planner and controller.

\subsection{Contextual Robot Safety}

An emerging line of work uses language and semantics to specify safety for robotic control \cite{ahn_autort_2024, kundu_specific_2023, varley_embodied_2024, sinha_real-time_2024, elhafsi_semantic_2023, sermanet_generating_2025}. Most commonly, these approaches assume the semantic context is given and focus on enforcing such context in policy behavior. For example, \cite{feng2025wordssafetylanguageconditionedsafety} presents a planner that respects language-specified safety instructions (e.g., ``avoid the swimming pool''). 
\cite{language_safety_hjb} considers a similar problem formulation which promotes safety via reachability-based planners. More recently, several works have abstracted these constraints via latent representations \cite{wu2025forewarn}, user-described scenes \cite{semantic_manipulation}, or from more general multimodal data \cite{sermanet_generating_2025}. In contrast, \texttt{CORE} does not require context-specific safety constraints to be provided by a user \emph{a priori}; as illustrated in Fig. \ref{fig:intro}, these safety constraints are inferred and enforced online. 

A parallel body of research enforces contextual safety at the planning level~\cite{yang2023plugsafetychipenforcing, ravichandran_roboguard}.
These works receive  a world model---such as a scene graph---and provide safety constraints via linear temporal logic or similar formalisms designed to verify discrete actions sequences~\cite{quartey2025verifiablyfollowingcomplexrobot, chen2023nl2tl, optimalscenegraphllm}.
While many of these methods evaluate against common-use safety scenarios, a line of research considers adversarial attacks~\cite{robey2024jailbreaking, jones2025adversarialattacksroboticvision}.
Similar to the previously discussed control approaches, many of these works assume that the relevant context is provided \emph{a priori} via user instructions or an intermediate representation~\cite{yang2023plugsafetychipenforcing, quartey2025verifiablyfollowingcomplexrobot}.
\texttt{CORE} extends these results by reasoning about and enforcing contextual safety directly from visual observations---removing the need for an intermediate representation---and enforcing safety at the continuous control level.

\subsection{VLMs for Robot Context Analysis}

The robotics community has leveraged VLMs for robotic contextual reasoning in applications including 
traversability estimation~\cite{elnoor2024robotnavigationusingphysically}, language-driven trajectory generation~\cite{sathyamoorthy2024convoicontextawarenavigationusing, windecker2025navitraceevaluatingembodiednavigation, song2024vlmsocialnav, indoorvlmnav2025}, and object search~\cite{xie2023reasoningvlmnav}. 
Grounding VLM reasoning in the physical world is a key challenge, which
recent work addresses this by imposing some structure on the VLM's input image---such as an overlayed grid of numbers or sample trajectories---from which the VLM selects~\cite{sathyamoorthy2024convoicontextawarenavigationusing, elnoor2024robotnavigationusingphysically, indoorvlmnav2025}. 
In such approaches, the quality of the VLM's prediction is limited by the resolution of the imposed structure (e.g., the density of sampled trajectories), which can often be coarse. 
In contrast, \texttt{CORE} infers safety constraints via a representation that accounts for the unique context within an image.

\setcounter{footnote}{1}

\begin{figure*}[t!]
    \centering
    \includegraphics[width=0.90\linewidth]{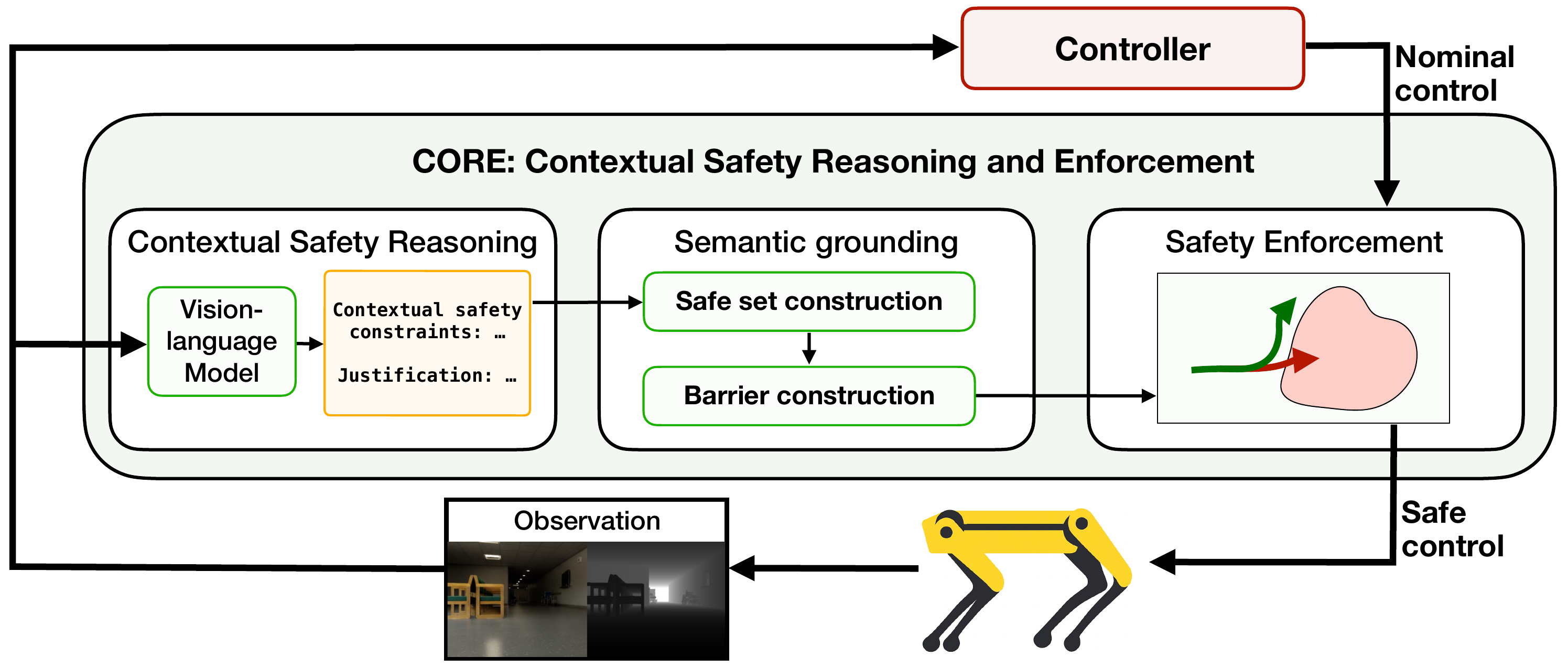}
    \caption{The \texttt{CORE} Framework enforces contextual safety via three modules. First, a VLM provides context-dependent safety constraints from visual observations. Constraints are grounded into spatially-defined safe sets, which are used for CBF-based control synthesis.}
    \label{fig:method}
\end{figure*}

\section{Problem formulation}
\label{sec:formulation}
We consider a mobile robot operating in an unknown environment where safety depends on semantic context that cannot be specified \emph{a priori}. 
The robot is equipped with an RGB-D camera and odometry, and has control-affine dynamics:
\begin{equation}
\label{eq:dynamics}
    \dot{x} = f(x) + g(x) u,
\end{equation}
where $x \in \mathbb{R}^n$ is the robot's state, $u \in \mathbb{R}^m$ is the control input, and $f: \mathbb{R}^n \rightarrow \mathbb{R}^n$ and $g: \mathbb{R}^n \rightarrow \mathbb{R}^{n \times m}$ are known.
A nominal controller provides a desired input $u_{\text{nom}}$, which may not satisfy safety constraints.
The robot is provided with high-level safety guidelines $\mathcal{G}$ (e.g., ``follow pedestrian social norms'') but must infer and enforce context-specific safety constraints from visual observations $I_t$ at runtime.

Achieving contextual safety in this setting requires addressing three interconnected challenges, as illustrated in Fig.~\ref{fig:intro}:
\emph{(i) Contextual reasoning}---inferring what constitutes safe behavior from visual observations;
\emph{(ii) Spatial grounding}---mapping the inferred contextual constraints to spatial safe sets $\mathcal{S} \subset \mathbb{R}^n$ that the robot can enforce, without access to pre-built maps; and
\emph{(iii) Probabilistic enforcement}---synthesizing safe control inputs despite uncertainty in perception, since both contextual inference and spatial grounding are imperfect.

\begin{problem}[Contextual Safety]
\label{prob:contextual_safety}
Given a robot with dynamics~\eqref{eq:dynamics}, high-level safety guidelines $\mathcal{G}$, and streaming visual observations $\{I_t\}$, synthesize a control input $u_{\text{safe}}$ such that:
\begin{enumerate}
    \item[(i)] \textbf{Reasoning:} Contextually appropriate safety constraints $\{s_1, \ldots, s_K\}$ are inferred from $I_t$ and $\mathcal{G}$,
    \item[(ii)] \textbf{Grounding:} Each constraint $s_k$ is grounded as a spatial safe set $\mathcal{S}_k \subset \mathbb{R}^n$, and
    \item[(iii)] \textbf{Safe control:} The robot remains within $\mathcal{S} = \bigcap_k \mathcal{S}_k$ with probability at least $1 - \delta$,
\end{enumerate}
while minimizing $\|u_{\text{safe}} - u_{\text{nom}}\|$.
\end{problem}

The following section presents \texttt{CORE}, a framework that addresses Problem~\ref{prob:contextual_safety} through three components: a VLM-based reasoning module for (i), a semantic grounding module for (ii), and a probabilistic CBF formulation for (iii).

\section{The \texttt{CORE} Architecture}
\label{sec:method}

The \texttt{CORE} architecture operates within the robot's control loop to infer, ground, and enforce contextually safe behavior via three modules, as illustrated in Fig.~\ref{fig:method}.
The first module uses a vision-language model (VLM) to infer context-specific safety constraints from visual observations.
These constraints are grounded into the robot's environment via a perception system that constructs spatially-defined safe sets.
Finally, these safe sets are used to synthesizes a control input which minimally modifies the nominal controller while satisfying contextualized and grounded safety constraints.

\subsection{Contextual Safety Reasoning}
\label{sec:safety_reasoning}

The contextual safety reasoning module infers safety constraints from visual observations using a vision-language model (VLM). 
Ensuring high-quality safety predictions requires a structured safety representation and VLM reasoning process, each of which we describe below.

\noindent \textbf{Structured safety representations.}  
To enable effective reasoning, we structure the VLM's output as a set of \emph{predicates}, which pairs semantic class (e.g., \verb|wet_floor_sign|, \verb|floor|) with a spatial operator (e.g., \verb|NEAR|, \verb|ON|) to define a region in the environment.
While \texttt{CORE} is agnostic to the specific set of operators, we consider four in our work: \verb|ON|, \verb|NEAR|, \verb|AROUND|, and \verb|BETWEEN|.
Each operator defines a safety logic outlining its definition and role in constructing safety predicates.\footnote{Please see the Appendix for further details.}
A predicate instantiates a semantic operator with a concrete semantic observed in the image. 
For example, a valid predicate for a robot operating in a building would be \verb|ON(floor)| or \verb|NEAR(desk)|.

\noindent \textbf{VLM reasoning process.} 
The VLM generates contextual safety constraints, $s = \{s_1, \dots, s_k\}$ via two sets of predicates: $\mathcal{P}_{\text{safe}}$ defining safe regions and $\mathcal{P}_{\text{unsafe}}$ defining unsafe regions.
The structure we place on the generation process is key to providing high-quality safety predictions. 
First, the VLM identifies safety-relevant classes such as hazard indicators or traversable terrain. 
The VLM then applies the safety logic defined by the available spatial operators to these semantics, in order to determine which operators are appropriate in the given context. 
For instance, the \verb|AROUND| operator indicates that the vicinity around an entity is dangerous.
When operating in an office, the VLM may observe that a \verb|wet_floor_sign| indicates surrounding danger, so it should construct the predicate \verb|AROUND(wet_floor_sign)|.
The VLM also provides a chain-of-thought justification of its safety decision~\cite{wei2022chain}.
The final output is a  JSON object containing the VLM's justification, safety-relevant semantics, and the predicates defining safe and unsafe regions, as illustrated in Fig.~\ref{fig:vlm_prediction}.B.
The set of operators, safety logic, and reasoning instructions are provided to the VLM via its system prompt.
As shown in \S\ref{sec:vlm_prediction}, this design greatly aids contextual safety reasoning.

\begin{figure}
    \centering
    \includegraphics[width=0.95\linewidth]{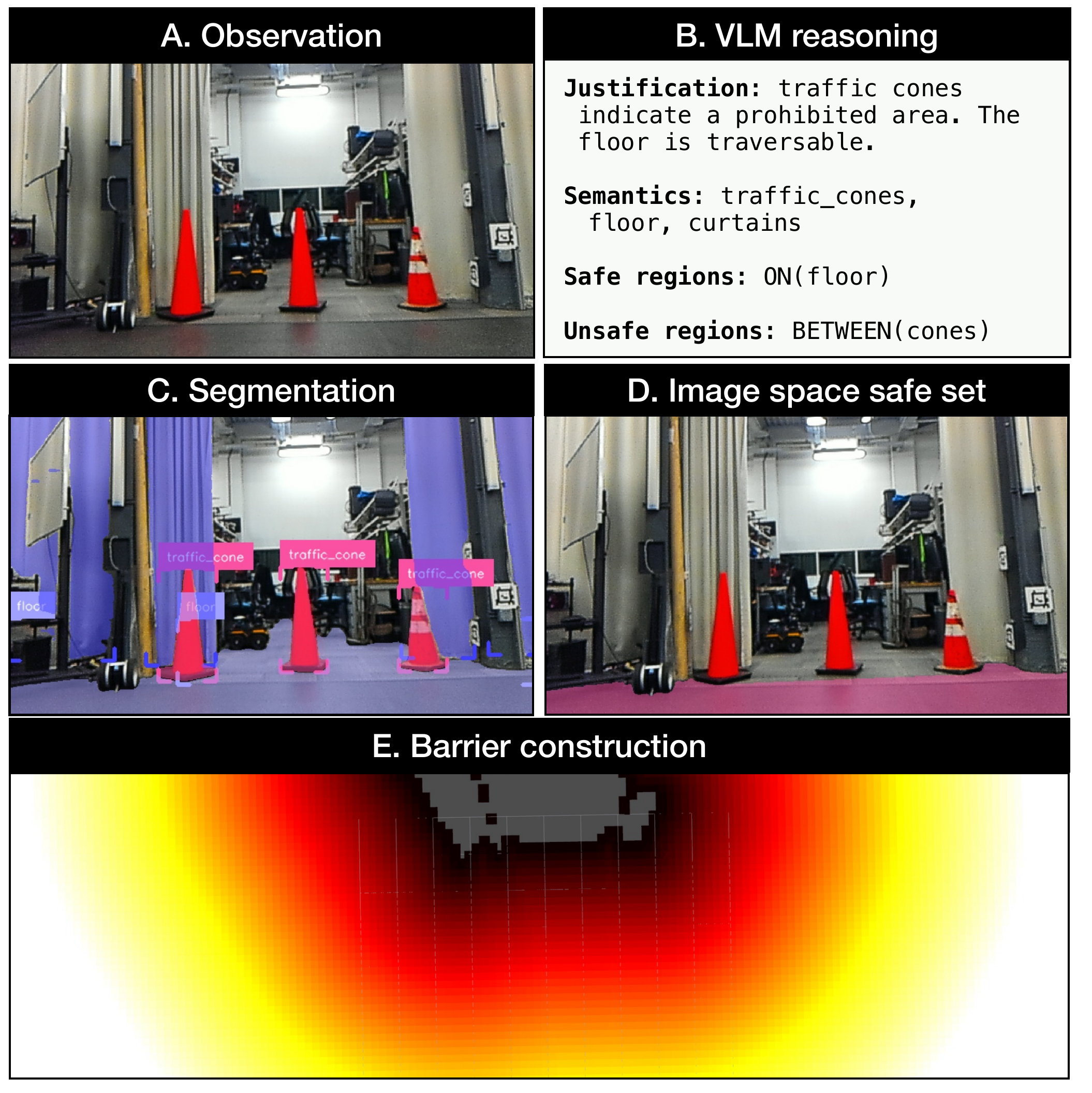}
    \caption{Illustration of \texttt{CORE}'s contextual reasoning and grounding process. Given an observation (A), the VLM predicts contextual safety constraints (B). \texttt{CORE} then segments the image (C) and constructs a image space safe set (D), which is integrated into a barrier function (E).}
    \label{fig:vlm_prediction}
    \vspace{-12pt}
\end{figure}

\subsection{Semantic Grounding}
This module grounds the VLM's safety constraints into spatially-defined safe sets that can be used for downstream control synthesis. 
Specifically, it constructs a barrier function, $h(x)$, which defines the safe set as all values of $x$ for which $h(x) \geq 0$. 
To construct this, the grounding module produces a safe set in the image space, projects it into the physical world, then integrates it into the barrier function.

\noindent \textbf{Image space safe set construction.}
The grounding module first localizes each semantic class identified by the VLM using an open-vocabulary segmentation model \cite{carion2025sam3segmentconcepts}.
It then constructs a pixel set for each predicate provided by the VLM (e.g., the predicate \verb|ON(floor)| would correspond to all pixels belonging to the \verb|floor| class in the segmented image).
Specifically, \verb|ON| and \verb|NEAR| selects pixels of the class, with \verb|ON| being reserved for traversable surfaces; \verb|AROUND| applies morphological dilation around the semantic instance; and \verb|BETWEEN| computes the convex hull of semantic instances.
This yields two collections of pixel sets corresponding to safe and unsafe predicates: $\{\mathcal{I}_j^{\text{safe}}\}$ and $\{\mathcal{I}_k^{\text{unsafe}}\}$.
The final safe set in image space is computed as:
\begin{equation}
    \mathcal{I}_{\text{safe}} = \left(\bigcup_j \mathcal{I}_j^{\text{safe}}\right) \setminus \left(\bigcup_k \mathcal{I}_k^{\text{unsafe}}\right),
\end{equation}
yielding a partition of safe and unsafe regions in the image. Fig.~\ref{fig:vlm_prediction}.C and D illustrate this process.

\noindent \textbf{Barrier function construction.}
The module then projects the image space safe and unsafe sets into a 3D point cloud using registered depth.
These points are accumulated into a 2D grid $\mathcal{C}$ which discretizes the robot's workspace.
For each cell $\mathcal{C}_{ij} \in \mathcal{C}$, we maintain counts $n_{ij}^{\text{safe}}$ and $n_{ij}^{\text{unsafe}}$ of safe and unsafe point observations falling within that cell.
We then associate a given robot state $x$ to its closest cell, $C_{ij}$, which provides the following estimate that a given state is safe:
\begin{equation}
    \mathbb{P}[x \text{ is safe}] = \frac{n_{ij}^{\text{safe}}}{n_{ij}^{\text{safe}} + n_{ij}^{\text{unsafe}}}.
\end{equation}

We threshold this probability to construct the the safe set $\mathcal{S} = \{x \mid \mathbb{P}[x \text{ is safe}] \geq \tau\}$, where $\tau$ is a hyperparameter.
Finally, we define the barrier function $h(x)$ as a signed distance field over $\mathcal{S}$:
\begin{equation}
\label{eq:barrier}
    h(x) = \begin{cases}
        \min_{y \in \partial \mathcal{S}} \|x - y\| & \text{if } x \in \mathcal{S} \quad \text{(safe)} \\
        -\min_{y \in \partial \mathcal{S}} \|x - y\| & \text{if } x \notin \mathcal{S} \quad \text{(unsafe)}
    \end{cases}
\end{equation}
where $\partial \mathcal{S}$ is the boundary of the safe set as computed from the thresholded grid.
This formulation ensures the desired properties of $h(x)$ as defined above and facilitates the CBF-based control synthesis approach described below.

\subsection{Contextual Safety Enforcement}
\label{sec:cbf}

The final module enforces contextual safety via control barrier functions (CBFs), which provide three key advantages for our setting: they do not require knowledge of the nominal controller (enabling compatibility across a variety of control approaches), they provide formal guarantees on safety adherence, and they admit efficient real-time solutions.
CBFs ensure safety by enforcing \emph{forward invariance}, meaning that if the robot starts in the safe set, it remains there:
\begin{equation}
\label{eq:invariance}
    x(0) \in \mathcal{S} \implies x(t) \in \mathcal{S} \quad \forall t \geq 0.
\end{equation}
For control-affine dynamics, forward invariance is guaranteed if the control input $u$ satisfies the constraint:
\begin{equation}
\label{eq:invariance}
    \langle \nabla h(x), f(x) + g(x) u \rangle  \geq -\alpha(h(x)),
\end{equation}
where $\alpha: \mathbb{R} \to \mathbb{R}$ is an extended class-$\mathcal{K}$ function (i.e., strictly increasing with $\alpha(0) = 0$)~\cite{ames_control_2019}.

Given the estimated barrier function $h(x)$ and a nominal control input $u_{\text{nom}}$, this module synthesizes a safe control input by solving the following optimization problem:
\begin{align}
\label{eq:control_synthesis}
    u_{\text{safe}} = \arg\min_u & \quad \|u - u_{\text{nom}}\|_2^2  \\
    \text{s.t.} & \quad \langle \nabla h(x), f(x) + g(x) u \rangle  \geq -\alpha(h(x)). \nonumber
\end{align}
The signed distance field representation of $h(x)$ in Eq.~\ref{eq:barrier} provides an analytical gradient in the form:
\begin{equation}
    \nabla \hat{h}(x) = \frac{x - y^*(x)}{\|x - y^*(x)\|}, \quad y^*(x) = \arg \min_{y \in \partial \mathcal{S}} \|x - y\|,
\end{equation}
where $y^*(x)$ is the closest point on the safe set boundary and can be found efficiently via grid lookups.
Eq.~\ref{eq:control_synthesis} is a quadratic program that can be solved efficiently using existing optimization tools; because the robot's dynamics are control affine and $u_\text{nom} = 0$ is a valid control input, a feasible solution will always exist.  
A challenge is that $h(x)$ evolves during robot operation, and we make the following assumption to ensure that the CBF invariance condition holds (Eq.~\ref{eq:invariance}):
\begin{assumption}[Safe initialization]
\label{assume:teleportation}
At any timestep $t$ when the robot is in the safe set, $h(x) \geq 0$,
updates to $h(x)$  cannot instantaneously classify the robot's current position as unsafe.
\end{assumption}
This assumption requires that contextually unsafe regions be identified by perception---i.e., VLM estimation and semantic grounding---before entry by the robot.
We now formally analyze this assumption by modeling the effect of perception uncertainty on CBF safety guarantees.

\subsection{Accounting for Perception Uncertainty}
\label{the_perception_uncertainty}
The high-level contextual safety of the \texttt{CORE} framework depends strongly on contextual understanding from perception. A crucial uncertainty arises in \emph{failed detections} which can severely impair safety in the downstream control. These considerations complement, but differ from, existing work on measurement-robust barrier functions (see \S\ref{sec:related_work}) that only consider disturbances in the state. 

The uncertainty module relies on the notion of a detection probability function $\underline{m}(\delta x): \mathbb{R}^3 \mapsto [0, 1]$, which expresses the (Bernoulli) probability of correctly detecting, from a new measurement, an unsafe region at relative position $\delta x$ with respect to the robot. Using this function, we give high-probability bounds on safe traversal rates across multiple contexts and environments with respect to failures in perception.

To define $\underline{m}(\cdot)$, the key technical concern is correlation in the observation data. That is, if an unsafe region is difficult to detect at time $t$, it is likely to be so at time $t+1$. This is addressed via a two-part decomposition. First, we design the guarantee to aggregate over contexts and environments. Thus, any worst-case contexts in which the VLM cannot make safe detections can be apportioned a separate risk fraction ($\gamma < \delta$) and can be ignored in the choice of $\underline{m}(\cdot)$; this is related to controlling a `value-at-risk' of the detection function \cite{majumdar_how_2017}. Second, we enforce conservatism in $\underline{m}(\cdot)$ to account for the temporal correlation. Adding conservatism after filtering out extreme, worst-case instances is designed to form a practical guarantee which reflects the downstream empirical performance -- this will be validated in Sect. \ref{sec:experiments}. Finally, we address a small notational mismatch: the control implementation is modeled in continuous time, whereas the perception is modeled in discrete time. This can be understood as, effectively, a simple two-part cascade, where the low-level control operates as a fast inner loop on the current perceptual representation (i.e., the current map), and the perceptual representation is itself updated in a slower outer loop. Given the practical perceptual latencies (see Tab. \ref{tab:safety_reasoning} and \ref{tab:vlm_ablation}), the loop timescales are separated by nearly two orders of magnitude, meaning they can be effectively decoupled and analyzed separately \cite{naidu_singular_2001}.

Conditioned on a design of $\underline{m}(\cdot)$, we analyze a proxy for the expected distance $r$ to the \emph{nearest undetected unsafe region}. Intuitively, if $r > 0$ then we currently remain safe. We define an inverse regularized distance function: 
\begin{equation}
\label{the_regularized_inverse_distance}
    d^{-1}(r; c, \ell) = \frac{c}{r + \ell}, 
\end{equation}
where $c, \ell > 0$ are positive hyperparameters. Intuitively, this function links the `dynamics' of gathering information to the expected physical safety margin; as more data is collected, the expected inverse distance shrinks, and the margin increases. From a fixed position, a robot taking $k$ measurements can upper bound the expectation of the inverse distance as: 
\begin{equation}
    \label{expected_inverse_distance_versus_r_equation}
        \mathbb{E}[d^{-1}(r; c, \ell)]\{k\} \leq \int_{0}^{\infty} \frac{c (1-\underline{m}(r))^k}{(r+\ell)Z}dr.
\end{equation}
Here $Z$ is a normalization constant to enforce the property of a valid probability distribution over the initial state of $r$. 
\begin{thm}[Probabilistic Trajectory-Length Safety]
\label{the_probabilistic_cbf_theorem}
    Assume a measurement function $\underline{m}(\cdot)$ satisfying the conditions given in Section \ref{the_perception_uncertainty}. Further, assume an initial safe radius $R > 0$ and consider a safety-filtered system which gathers, at each time $t$, $k^*(t)$ observations such that:
    \begin{equation}
        \label{the_theorem_equation}
        \mathbb{E}[d^{-1}(r; c, \ell)]\{k^*\} \leq \frac{c (\delta-\gamma)}{\ell}.
    \end{equation}
    Then, $\mathbb{P} \big[\exists t : x_t \notin \mathcal{S} \big] \leq \delta$. 
\end{thm}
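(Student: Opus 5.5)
The plan is to split the failure event into two disjoint sources of risk and bound each separately, then combine them with a union bound. The first source is the worst-case contexts and environments, in which the VLM cannot produce safe detections. By the construction of $\underline{m}(\cdot)$ in Section~\ref{the_perception_uncertainty}, these are assigned the separate risk fraction $\gamma$ and are excluded when $\underline{m}$ is chosen. So the task reduces to showing that, conditioned on the complementary event (where the conservative detection model $\underline{m}$ is valid), the probability of ever leaving $\mathcal{S}$ is at most $\delta-\gamma$.

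On the complementary event, I would invoke the timescale separation argued in Section~\ref{the_perception_uncertainty} to decouple the two loops. Within each perception update the barrier $h$ is fixed, and the CBF constraint in Eq.~\eqref{eq:control_synthesis} gives forward invariance of the current $\mathcal{S}$. Assumption~\ref{assume:teleportation} then covers the transition between updates, provided no unsafe region that the robot currently occupies, or is about to reach, remains undetected. So a safety violation can only occur if the distance $r$ to the nearest undetected unsafe region reaches $0$.

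This event is converted into a bound via Markov's inequality on the regularized inverse distance $d^{-1}(r;c,\ell)=c/(r+\ell)$. The key observation is that $\{r \le 0\}$ (equivalently $r=0$, since $r \ge 0$) is exactly the event $\{d^{-1}(r;c,\ell) \ge c/\ell\}$, because $d^{-1}$ is decreasing in $r$. Markov's inequality therefore gives
\[
\mathbb{P}[r=0] \le \frac{\ell}{c}\,\mathbb{E}[d^{-1}(r;c,\ell)]\{k^*\} \le \delta-\gamma ,
\]
using the hypothesis~\eqref{the_theorem_equation} together with the upper bound~\eqref{expected_inverse_distance_versus_r_equation}. The factor $(1-\underline{m}(r))^k$ in~\eqref{expected_inverse_distance_versus_r_equation} is the probability that an unsafe region at distance $r$ survives $k$ independent Bernoulli detection attempts. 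Here the conservatism built into $\underline{m}$ is what licenses treating these attempts as independent despite temporal correlation. The initial safe radius $R>0$ guarantees that the prior distribution over $r$, normalized by $Z$, places no mass at $r=0$ at the start.

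The main obstacle is passing from a per-time bound to a bound on $\exists t$, i.e., over the whole trajectory. Naively applying a union bound over time would blow up with trajectory length. I would instead argue that the hypothesis is imposed uniformly at every $t$ through the choice of $k^*(t)$, and that the quantity $\ell/(r_t+\ell)$ is maintained as an upper envelope on the running risk. The idea is to treat $c/(r_t+\ell)$ as a nonnegative supermartingale-like process, so that Ville's maximal inequality bounds $\mathbb{P}[\sup_t d^{-1}(r_t) \ge c/\ell]$ by its initial expectation bound $\delta-\gamma$. If the supermartingale property cannot be justified from the stated assumptions, I would fall back on interpreting the guarantee as aggregated over contexts and environments, as the section describes, with the per-update bound applied at the first potential entry time. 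Adding the $\gamma$ risk fraction then gives the total bound $\delta$.
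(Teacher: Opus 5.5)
Your skeleton matches the paper's. You peel off the $\gamma$ fraction of contexts where $\underline{m}$ is invalid and add it back with a union bound. You identify entry into an undetected unsafe region with the event $\{c/(r_t+\ell)\ge c/\ell\}$. You then apply Ville's inequality to the nonnegative process $D_t=c/(r_t+\ell)$, with its expectation held at $c(\delta-\gamma)/\ell$.

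The gap is at the step you flag yourself. The hypothesis only bounds $\mathbb{E}[D_t]$ at each time, but Ville needs the contraction $\mathbb{E}[D_t\mid\mathcal{F}_{t-1}]\le D_{t-1}$, and you give no mechanism for it. Most of the paper's proof goes into the ingredient you are missing:
\begin{enumerate}
\item It adds conservative technical assumptions: a maximal sensing radius $D$ with $\underline{m}_D(r)=0$ for $r\ge D$; $0<\underline{m}_D(r)<1$ for $r<D$; and $\underline{m}_D$ strictly decreasing.
\item It differentiates the normalized expectation in the number of observations $k$ by the Leibniz rule, using $\nabla_k Z/Z=\mathbb{E}[\ln(1-\underline{m}_D(r))]$.
\item This gives $\frac{d}{dk}\mathbb{E}[d^{-1}(r;c,\ell)]\{k\}=\mathrm{Cov}\bigl(c/(r+\ell),\ln(1-\underline{m}_D(r))\bigr)\le 0$, since the first function is decreasing and the second increasing in $r$.
\end{enumerate}
This monotone decrease under data assimilation (no ``transient effects'') is what lets a controller choosing $k^*(t)$ drive the expectation down and hold it at the target. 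The starting point is the pre-measurement bound $c/(R+\ell)$, which is the actual role of $R$ and may well exceed $c(\delta-\gamma)/\ell$. The paper then treats the resulting supermartingale property (or, per its footnote, $E$-process property) as enforced by the controller. Using $R$ only to say the prior puts no mass at $r=0$ misses this role.

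Your fallback would not rescue the argument. Applying the fixed-time Markov bound ``at the first potential entry time'' $\tau$ requires $\mathbb{E}[D_\tau\mathbf{1}\{\tau<\infty\}]\le c(\delta-\gamma)/\ell$. Bounds on $\mathbb{E}[D_t]$ for each deterministic $t$ do not imply this. Controlling the process at a random hitting time is exactly what optional stopping for nonnegative supermartingales, i.e.\ Ville's inequality, provides. Without that structure you are left with a single-time bound, or a union bound over time that degrades with trajectory length, which is what you wanted to avoid. Drop the fallback and commit to the supermartingale reading of the hypothesis, backed by the monotonicity-in-$k$ lemma. Your Markov step then becomes redundant, since it is the one-time special case of Ville.
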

\begin{proof}
    The full proof is deferred to the Appendix. Using results from nonnegative supermartingales and Ville's Inequality (similar in technique to \cite{cosner2023robustsafetystochasticuncertainty}), the expectation of $d^{-1}(\cdot)$ is precisely linked to its upper distribution quantiles, which allows for control of the probability of entering an undetected safe region at level $\delta - \gamma$. A union bound with the pre-specified worst-case environments (for which $\underline{m}(\cdot)$ is allowed to be invalid) gives the total risk at level $\delta$.
\end{proof}

\begin{figure}[t!]
    \centering
    \includegraphics[width=0.95\linewidth]{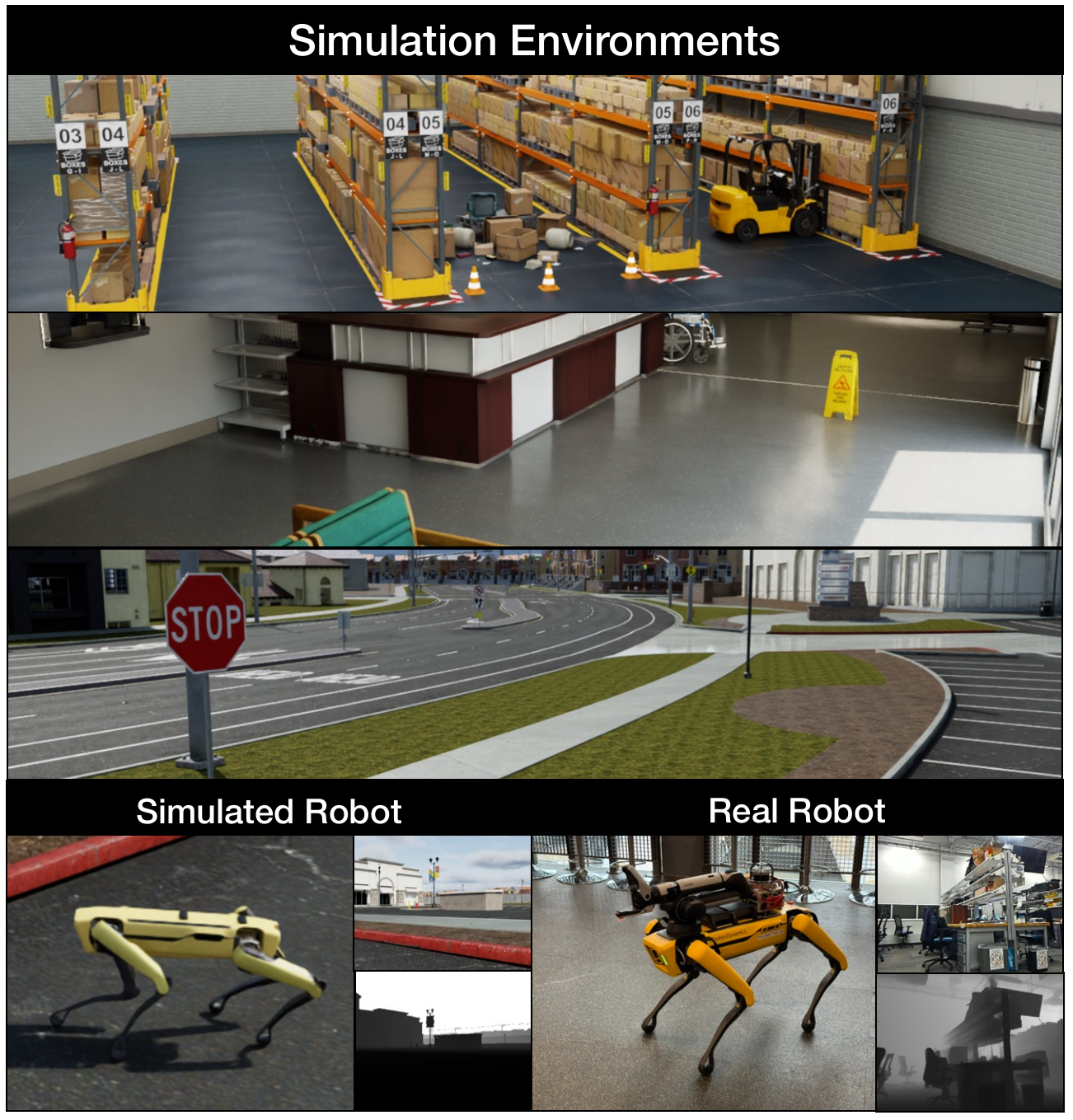}
    \caption{Simulation Environments and platforms used for experiments.}
    \label{fig:exp_setup}
    \vspace{-12pt}
\end{figure}

\section{Experimental Results}
\label{sec:experiments}

We evaluate \texttt{CORE} through simulation and real-world experiments to answer the following questions:
\begin{enumerate}
    \item [1.] Can \texttt{CORE} reason about contextual safety constraints without prior knowledge of the environment?
    \item [2.] Can \texttt{CORE} accurately ground the inferred contextual  constraints in the robot's physical environment?
    \item [3.] Can \texttt{CORE} enforce the contextual constraints while minimally interfering with safe nominal behavior?
\end{enumerate}

We first describe the key elements of our experimental setup: environments and platforms, safety scenarios, baselines, and implementation details. 
We then present quantitative results comparing \texttt{CORE} to baselines, ablation studies on optimal VLM models and effective contextual safety reasoning, real-world deployment results, and analysis of our probabilistic safety guarantees under perception uncertainty.

\noindent \textbf{Environments and platforms.} 
We first evaluate in NVIDIA Isaac Sim, a photorealistic physics simulator with high-fidelity robot dynamics~\cite{NVIDIA_Isaac_Sim}.
We consider three environments as illustrated in Fig.~\ref{fig:exp_setup}: a warehouse, a hospital, and a outdoor residential area.
% containing heavy machinery, hazard signs, and restricted zones;  a hospital with medical equipment, people, and patient areas; and a large-scale outdoor residential area featuring sidewalks, buildings, and pedestrian spaces.
These environments collectively provide a wide range of contextual safety scenarios, from industrial hazards to social norms.
We also deploy \texttt{CORE} on a Boston Dynamics Spot robot operating in a lab space with people, desks, hazard signs, and similar semantics.

For both simulation and real-world experiments, the robot is equipped with a forward-facing RGB-D camera and provides odometry.
We implement our system with the following dynamics model:
\begin{align}
    x_{t+1} = x_t + \Delta t \begin{bmatrix}
        \cos \theta_t & -\sin \theta_t & 0 \\
        \sin \theta_t & \cos \theta_t & 0 \\
        0 & 0 & 1
    \end{bmatrix}
    u_t.
\end{align}
where $x_t \in \mathbb{R}^3$ is the robot state comprising 2D position and heading, $u_t \in \mathbb{R}^3$ is the control input comprising 2D linear velocity and yaw rate, and $\Delta t = 0.1$s is the control period.
While our theoretical analysis in \S\ref{sec:method} uses continuous-time dynamics, we use a discrete-time implementation as is common in practice~\cite{Brunke2024PracticalCF, Taylor2022SafetySD}.

\noindent \textbf{Safety Scenarios}
We use the following contextual safety scenarios where obstacle avoidance is insufficient.
\begin{enumerate}
    \item [1.] \textbf{Contextual buffers} comprise objects that imply danger in their vicinity. For example, a forklift in a warehouse requires a safety buffer because heavy machinery may move unpredictably and cause damage.
    \item [2.] \textbf{Contextual barriers} comprise certain arrangements of objects that indicate prohibited regions. For example, a line of traffic cones signals a restricted area, even though individual cones are not physical obstacles.
    \item [3.] \textbf{Contextually-appropriate traversal} requires the robot must distinguish between appropriate and inappropriate navigation surfaces. For example, a sidewalk is appropriate for robot navigation while adjacent grass is not, despite both being physically traversable.
\end{enumerate}
Using these categories,
we design unsafe tasks where the nominal controller guides the robot toward violating the contextual safety constraints via adversarially placed goals.
We then design tasks where the nominal controller respects safety.
We evaluate four tasks per environment---two safe and two unsafe---and we repeat each task five times, yielding 60 evaluations. Please find further details in the Appendix.

\noindent \textbf{Baselines.}
We consider the following three baselines:
\begin{enumerate}
    \item [1.] \textbf{Oracle:} This baseline provides \texttt{CORE} with ground truth constraints \emph{a priori} via natural language instructions.
    \item [2.] \textbf{No Context:} This baseline removes the contextual reasoning ability of \texttt{CORE} via the following modification. Instead of using a VLM to produce safety rules online, this baseline uses an LLM---provided with the same system prompt outlining safety logic---to produce safety rules before deployment, similar to prior work that operates with upfront safety rules~\cite{feng2025wordssafetylanguageconditionedsafety,semantic_manipulation,language_safety_hjb}.
    \item [3.] \textbf{Geometric.} This safety filter which only performs obstacle avoidance based on metric information.
\end{enumerate}
To ensure a fair comparison, all baselines use the control synthesis approach outlined in \S\ref{sec:cbf}. 

\noindent \textbf{Implementation details.}
We implement the Safety Reasoning Module with a 4 bit quantized Gemma 3 27B VLM. 
The Semantic Grounding Module's segmentation is performed with SAM3~\cite{carion2025sam3segmentconcepts} and spatial operators are implemented with OpenCV~\cite{opencv_library}.
Because the Spot's RGB-D camera provides noisy and sparse depth estimates, real-world experiments predict smooth depth using DepthAnythingV3~\cite{depthanything3},  associate depth estimates with returns from the real camera, and estimate scaled depth using median fitting.
We implement the Semantic Grounding Module's costmap with a resolution of $0.2m$ and probability threshold of $0.5$.
The controller instantiates Eq.~\ref{eq:control_synthesis} with CVXPY~\cite{diamond2016cvxpy} using $\alpha(x) = 0.25 x$.
The nominal controller uses a waypoint follower with a PID controller.
Finally, we perform all closed-loop experiments in ROS2.
Please find further details in the Appendix.

\begin{figure}[t!]
    \centering
    \includegraphics[width=0.95\linewidth]{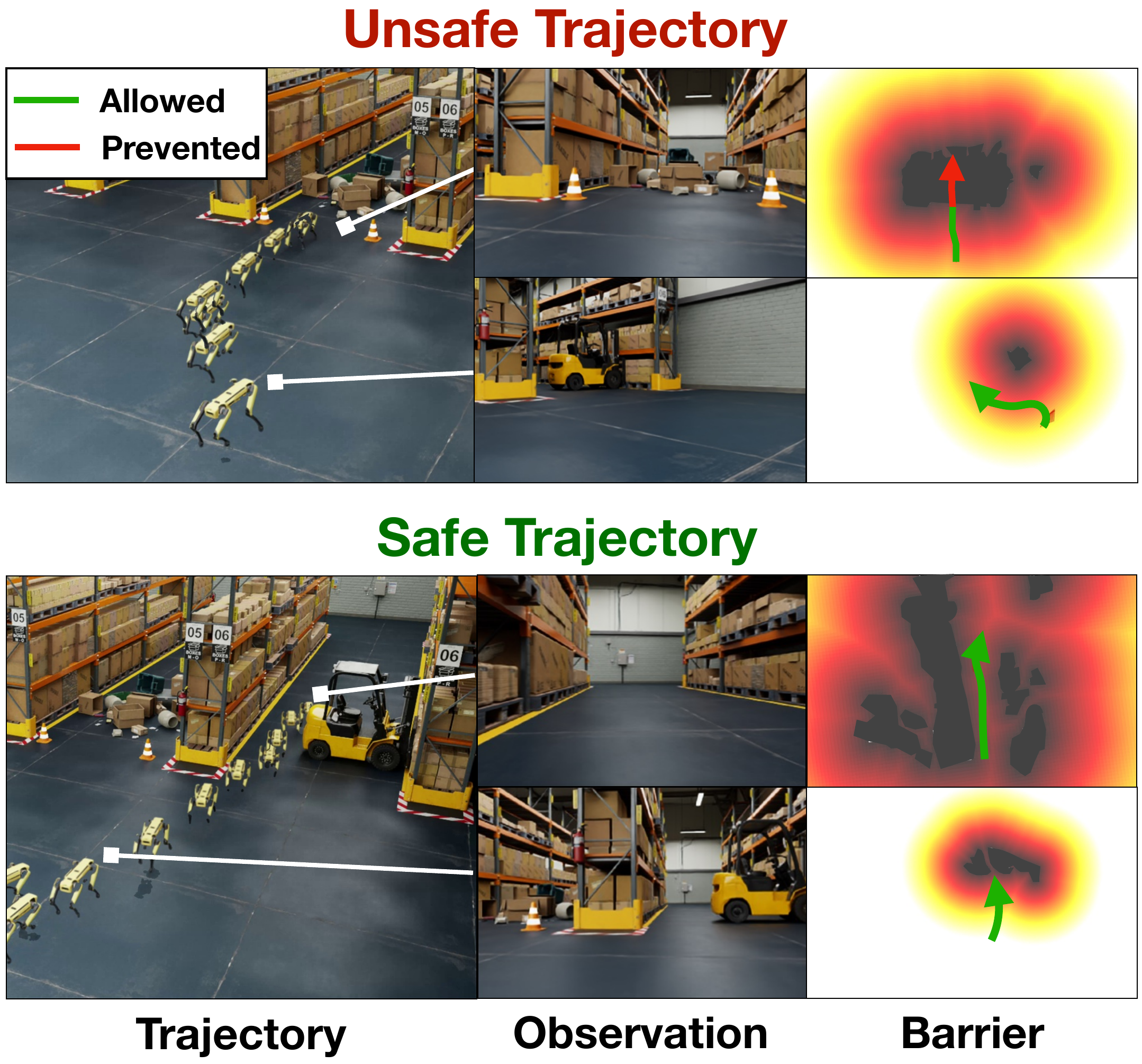}
    \caption{Example safe and unsafe tasks. Top: the nominal controller attempts to guide the robot into an area prohibited by cones. Bottom: the nominal controller attempts to guide the robot through a space constrained aisle.}
    \label{fig:trajectories}
    \vspace{8pt}
\end{figure}

\begin{table}[t!]
    \centering
    \begin{tabular}{ccccccc}
    \toprule
        \multirow{3}{*}{Method} & \multicolumn{3}{c}{Task Success} & \multicolumn{3}{c}{Failure Attribution} \\
        \cmidrule(lr){2-4} \cmidrule(lr){5-7}
        & Total & Safe & Unsafe & Ctx. & Grnd. & Enf. \\
        \midrule
         Oracle  & 98.3\% & 100\% & 96.6\% & 0.0\% & 1.7\% & 0.0\% \\
         \texttt{CORE} & 95.0\% &   96.6\% & 93.3\% & 3.3\%\ & 1.7\% & 0.0\% \\
         No Context & 55.0\% &  93.3\% & 16.6\% & 43.3\% & 1.7\% & 0.0\%  \\
         Geometric & 50.0\% & 100\% & 0\% & 100.0\% & 0.0\% & 0.0\%\\
        \bottomrule \\
    \end{tabular}
    \caption{Simulation results over 60 evaluations.}
    \label{tab:sim_results}
    \vspace{-10pt}
\end{table}

\subsection{Simulation Results}
\label{sec:sim_results}
We compare \texttt{CORE} to the baselines described above.
Tab.~\ref{tab:sim_results} reports success rate and identifies failure modes over all trials. Ctx. indicates a failure to operate with correctly contextualized safety rules, Grnd. indicates a semantic grounding failure, and Enf. indicates a failure to enforce appropriately grounded rules.
\texttt{CORE} achieves a success rate of 96.6\% for safe tasks and 93.3\% for unsafe tasks, with failures occurring because of incorrect contextual reasoning (3.3\%) and grounding (1.7\%).
This rivals the Oracle's performance of 100\% and 96.6\% success rate respectively, with all failures occurring because of incorrect grounding (1.7\%).
In contrast, the No Context method achieves a 93.3\% safe tasks success but only prevents 16.6\% of unsafe tasks.
Unsurprisingly, incorrect contextual information was the main cause of error (43.3\%).
Because this baseline could not infer safety rules online, it relied on generic, pre-defined safety rules predicted by its LLM (e.g., avoiding \verb|hazard_indicator| semantics), which are only effective in limited and obvious cases such as prominent hazard signage.
The geometric baseline succeeds in all safe tasks but fails to prevent any contextually unsafe behavior.

An example unsafe task is illustrated in Fig.~\ref{fig:trajectories} (top), where the nominal controller attempts to guide the robot into a restricted area denoted by a line of cones. As the robot approaches the obstacle, \texttt{CORE} identifies the hazard markings, grounds them in a barrier function, and stops at the safety boundary. Fig.~\ref{fig:trajectories} (bottom) illustrates a safe tasks were the nominal controller guides the robot through a constrained hallway. While \texttt{CORE} identifies both metric and semantic  obstacles, it allows for nominal control.
We observe two primary failure modes of \texttt{CORE}.
In the 3.3\% of failures caused by incorrect context, the VLM failed to identify safety constraints at key timesteps. 
While correct context was provided at future predictions, these were not made in time to prevent unsafe behavior.
We also observed grounding error where ghost obstacles due to depth projection noise would be added to the barrier and block the robot from completing safe tasks.

\begin{table}[!]
    \centering
    \begin{tabular}{cccccc}
    \toprule
        \multirow{3}{*}{Size} & \multirow{3}{*}{VLM} & \multicolumn{3}{c}{Safety Prediction Success} & \multirow{3}{*}{Latency} \\ 
        \cmidrule(lr){3-5}
         &  & Total & Safe & Unsafe  &  \\
        \midrule
        Small & Gemma 3 12B & 75.0\% & 83.0 \% & 67.0\% & 2.2s\\
         & Llava 13B & 0.0\% & 0.0\% & 0.0\% & 2.7s \\
        \midrule
            & Gemma 3 27B & 88.0\% & 91.0\%  & 85.0\% & 4.1s \\
        Large & Qwen3-VL 30B & 75.0\% & 77.0\% & 73.0\% & 26.6s\\
         & Llava 34B & 52.5\% & 49.0\% & 56.0\% & 6.0s \\
        \bottomrule \\
    \end{tabular}
    \caption{VLM Safety Constraint Prediction Success}
    \label{tab:safety_reasoning}
    \vspace{-10pt}
\end{table}

\subsection{VLM Safety Reasoning Comparison}
We then compare the ability of different VLM models to perform contextual safety reasoning for producing contextual constraints.
We focus on three open-source options---Gemma3~\cite{gemma3_techreport2025}, Qwen3-VL~\cite{bai2023qwenvl}, and Llava~\cite{liu2023visualllava}---and we consider small  and large model variants, the latter approaching the upper bound of many embedded compute resources.
We evaluate their ability to predict safe and unsafe predicates over a dataset of 100 images sampled from our experimental setup.
We run all models with 4 bit quantization and compare latency on an Nvidia L40.
Tab.~\ref{tab:safety_reasoning} reports all results. Gemma3 12B achieved success rate of 83.3\% and 67.0\% on safe and unsafe predicate prediction respectively, with an average latency of 2.2s. 
Llava 13B failed to produce syntactically correct outputs, generating natural language outputs that omit predicates or JSON structure.
Gemma 3 27B achieved a success rates of 91.0\% and 85.0\% with an average latency of 4.1s.
Qwen3-VL 30B employs inference-time reasoning which leads to a significantly higher average latency of 26.6s, but surprisingly achieved an inferior performance of 77.0\% and 73.0\%.
Llava 34B achieved a 49.0\% and 56.0\% success rate with an average latency of 6.0s.
These results indicate that the choice of VLM is critical for effective safety reasoning, with Gemma3 27B being best suited for this task to due its balance of speed and accuracy. While trading accuracy for speed---as the Gemma 3 12B variant does---may be advantageous in some settings, we defer analysis to future work.

\subsection{Structured Reasoning Ablation}
\label{sec:vlm_prediction}
We then evaluate the importance of our structured reasoning framework by ablating two key components from the generation procedure described in \S\ref{sec:safety_reasoning}.
First, we remove safety logic instructions from the VLM---providing instead a short description of the problem definition and available spatial operators (No Safety Logic). 
Second, we remove the chain-of-thought reasoning output from the VLM's generation (No CoT Reasoning).
We evaluate the VLM's ability to predict safety predicates on the dataset defined above.
Tab.~\ref{tab:vlm_ablation} indicates that both components contribute to prediction accuracy.
Removing the safety logic instructions reduces performance to 90.0\% and 80.0\%, suggesting the safety logic particularly helps the VLM identify safety hazards.
Removing the VLM's CoT reasoning more severely degraded accuracy, reducing performance to 61.0\% and 77.0\%, with the most significant drop coming on safe region detection.
While this variant achieved a lower latency by producing fewer tokens per generation, this improved latency came at a high accuracy cost.
These results suggest that structured safety reasoning---both via a defined safety logic and explicit reasoning in the generation---are essential for reliable safety prediction.

\begin{table}[t]
    \centering
    \begin{tabular}{cccc}
    \toprule
         Prompt & Safe & Unsafe  & Latency \\
        \toprule
        Full (ours)  & 91.0\%  & 85.0\% & 4.1s \\
        No Safety Logic & 90.0\% & 80.0\% & 4.1s \\
        No CoT Reasoning & 61.0\% & 77.0\% & 2.8s \\
        \bottomrule \\
    \end{tabular}
    \caption{Safety Constraint Prediction Ablation}
    \vspace{-8pt}
    \label{tab:vlm_ablation}
\end{table}

\subsection{Hardware Experiments}

We then apply \texttt{CORE} on a Boston Dynamics Spot Robot operating in a lab space and we compare to the geometric baseline.
We evaluate 3 safe and 3 unsafe tasks, each repeated 5 times, for a total of 30 tasks per method.
Tab.~\ref{tab:hardware_results} reports results using the same metrics as \S\ref{sec:sim_results}.
\texttt{CORE} achieves a success rate of 86.6\% split evenly across safe and unsafe tasks, while the geometric method allows for all safe tasks while preventing no unsafe tasks.
The increased failure rate as compared to simulation results (Tab.~\ref{tab:hardware_results}) came primarily from grounding error. 
In one instance, the VLM predicted an overly conservative specification and prevented safe traversal. 
All other failures were caused by incorrectly localized safety constraints and misdetected semantics.

\subsection{Safe Traversal Guarantees}
We generate perception uncertainty safe traversal bounds using the method of \S\ref{the_perception_uncertainty}. The detection function is calibrated from the Gemma 3 27B results (see Tab. \ref{tab:safety_reasoning}) within the VLM safety reasoning comparison in \S\ref{sec:safety_reasoning}. We target a safe traversal guarantee $\delta^*$, assigning to $\gamma$ one third of the total allowed risk. The safe initial radius $R$ is set in the experiments to $4$ meters. To apply the method of \S\ref{the_perception_uncertainty}, the VLM latency and maximal allowed robot navigation speed (i.e., in the planner) are ``nondimensionalized'' into a fundamental unit linking information and safety: the number of observations per distance-$R$ traversal. From this, we lower bound the safe traversal rate.\footnote{Additional optimization details and discussion of modeling assumptions are deferred to the Appendix.} 
We then search for and find a certificate at $\delta = 0.1$, giving a guarantee that 90\% of trajectories will be safely traversed, subject to our calibration of the measurement function being safe. These results are empirically validated in Table \ref{tab:sim_results}, in which \texttt{CORE} successfully identifies semantically unsafe regions and avoids them with empirical success of approximately 93\%.

\begin{table}[]
    \centering
    \begin{tabular}{ccccccc}
    \toprule
        \multirow{3}{*}{Method} & \multicolumn{3}{c}{Task Type} & \multicolumn{3}{c}{Failure Attribution} \\
        \cmidrule(lr){2-4} \cmidrule(lr){5-7}
        & Total & Safe & Unsafe & Ctx. & Grnd. & Enf. \\
        \midrule
         \texttt{CORE} & 86.6\% & 86.6\% & 86.6\% & 3.9\% & 9.0\% & 0.0\% \\
         Geometric & 50.0\% & 100\% & 0.0\% & 100.0\% & 0.0\% & 0.0\%\\
        \bottomrule \\
    \end{tabular}
    \caption{Hardware results over 30 evaluations}
    \vspace{-8pt}
    \label{tab:hardware_results}
\end{table}

\section{Limitations and Future Work}
\label{sec:limitations}

We identify several limitations and avenues for future work.
The first set of limitations regard \texttt{CORE}'s use of VLMs.
Our perception uncertainty model assumes an average error rate but does not consider per-frame prediction uncertainty.
Developing uncertainty-aware VLMs is an outstanding research topic; future work may both propose methods for this and use uncertainty-aware perception to inform safe control. 
Follow-up work may also leverage distillation techniques to reduce VLM inference time~\cite{ravichandran_prism}.
The second set of limitations regard \texttt{CORE}'s safety formalisms.
While \texttt{CORE} enforces contextual safety via spatially-definecited safe sets, extensions may consider temporally changing environments via the use of signal temporal logic or similar specifications~\cite{donze2010robust}.
\texttt{CORE} further restricts itself to control affine systems. Addressing scenarios where higher order dynamics are crucial---such as high-speed driving---requires higher-fidelity models with more complex control solutions. 
Finally, the \texttt{CORE} framework makes minimal assumptions about the robot's planning and control layer. Follow up work may propose coupled safety solutions that use \texttt{CORE}'s contextual safety reasoning to inform and adapt the robot's nominal plan. This application would be especially interesting given the emergence of language-driven planners for complex mission specifications~\cite{ravichandran_spine, momallm24}

\section{Conclusion}
\label{sec:conclusion}
We consider the problem of enforcing contextual safety for robots in unknown environments.
We address this challenge via CORE, a three-stage contextual safety framework.
CORE uses a VLM to reason about contextual safety constraints given visual observations, grounds these constraints into the robot's environment, then enforces them via a CBF-based controller with probabilistic safety guarantees.
We validate CORE through simulation and hardware experiments, demonstrating that CORE rivals an oracle with ground truth context while significantly outperforming methods without contextual reasoning. We also present ablation studies that demonstrate the importance of structured contextual safety reasoning and validate CORE's probabilistic safety guarantees.

\bibliographystyle{IEEEtran}
\bibliography{references}

\newpage

% \appendix

\clearpage
\appendices

\renewcommand{\thelstlisting}{A.\arabic{lstlisting}} % Prepend "A." to listing numbers
\setcounter{lstlisting}{0}                          % Reset listing counter

\renewcommand{\thefigure}{A.\arabic{figure}} % Prepend "A" to figure numbers
\renewcommand{\thetable}{A.\Roman{table}}  % Prepend "A" to table numbers

\setcounter{figure}{0}                     % Reset figure counter
\setcounter{table}{0}

\section*{Summary}

We provide additional details about our method in \S\ref{sec:appendix_method}, including CORE's spatial operators (\S\ref{sec:appendix_ops})  the VLM's prompt (\S\ref{sec:appendix_prompt}), and a proof of Theorem~\ref{the_probabilistic_cbf_theorem}. 
In \S\ref{sec:appendix_experiments}, we provide further details about our implementation (\S\ref{sec:appendix_details}), experimental tasks (\S\ref{sec:appendix_tasks}), provide additional analysis (\S\ref{sec:appendix_analysis}), and describe how we numerically solve Theorem~\ref{the_probabilistic_cbf_theorem} to generate the safe traversal bounds presented in \S\ref{sec:experiments}.

\section{Method}
\label{sec:appendix_method}

We now provide details on CORE's spatial operators, the VLM prompt, and a proof of Theorem~\ref{the_probabilistic_cbf_theorem}.

\subsection{Spatial Operators}
\label{sec:appendix_ops}
We implement CORE with four spatial operators; Fig.~\ref{fig:spatial-operators} illustrates how each operator may be used to construct a safety predicate.

\verb|ON(class)|: Indicates all pixels corresponding to \verb|class|. This operator is reserved for traversable surfaces and is implemented by extracting the pixels corresponding to a segmented class. The safety logic instructs the VLM to identify surfaces intended for navigation, such as sidewalks or the ground in an office building.

\verb|NEAR(class)|: Indicates all pixels corresponding to \verb|class|. This operator is reserved for objects and has an identical implementation to \verb|ON|. The safety logic instructs the VLM to identify objects that pose collision risks.
    
\verb|AROUND(class)|: Indicates the vicinity around \verb|class| and is implemented by dilating the class segmentation mask using a kernel size of 50 pixels. The safety logic instructs the VLM to identify semantics that indicate the vicinity is dangerous or would result in socially unacceptable behavior, such as navigating very close to a pedestrian.

\verb|BETWEEN(class)|: Indicates the area between instances of \verb|class| and is implemented by first finding the pixel sets corresponding to \verb|class| and then computing the convex hull around those sets. The safety logic instructs the VLM to identify semantics that both represent a prohibited area and are arranged such that they clearly restrict an area.

\begin{figure*}
    \centering
    \includegraphics[width=0.95\linewidth]{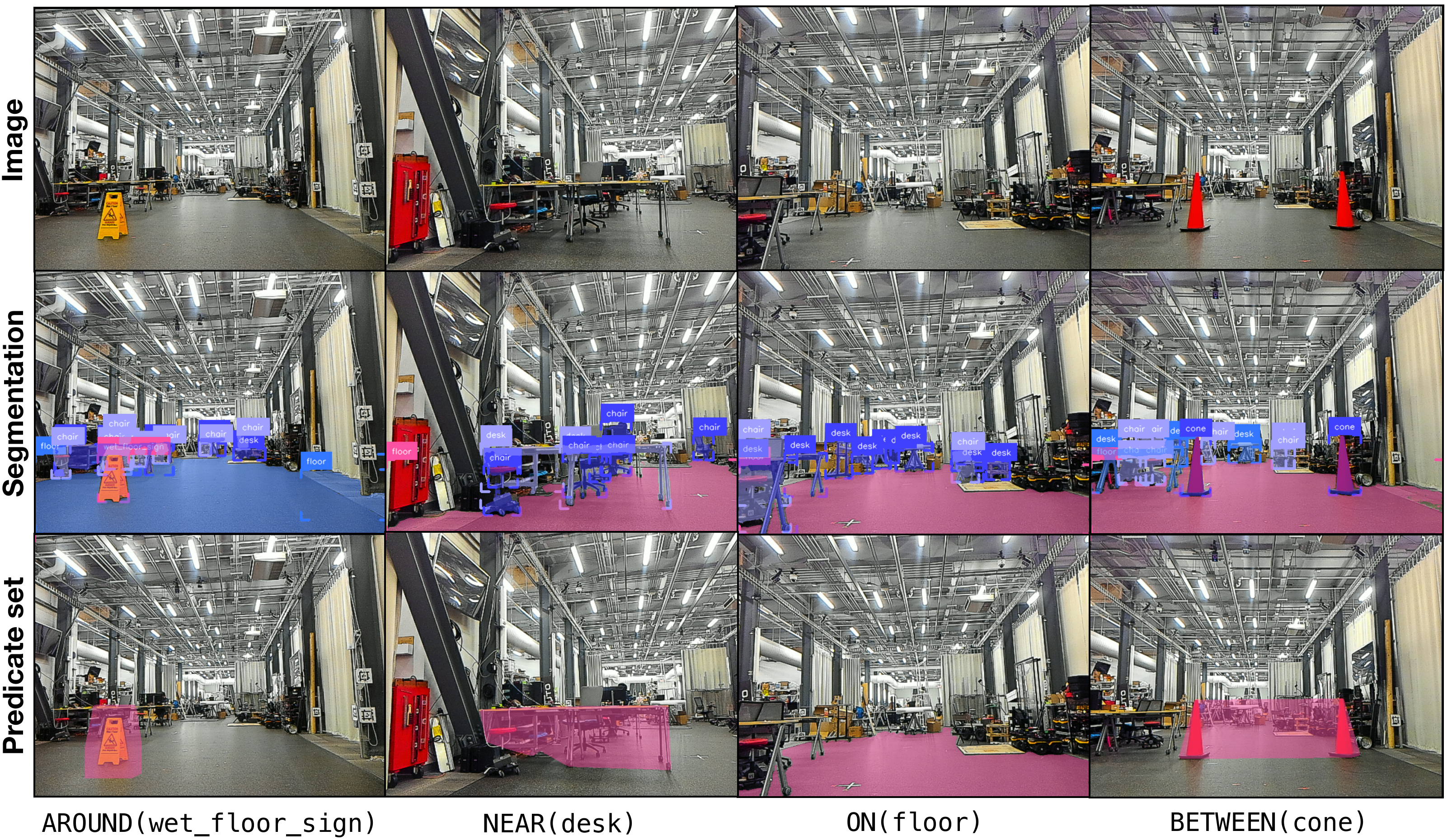}
    \caption{Example safety predicates constructed from the four spatial operators considered   in our work. For each predicate, we show the source observation and segmentation mask. }
    \label{fig:spatial-operators}
\end{figure*}

\subsection{VLM Prompt}
\label{sec:appendix_prompt}
The VLM prompt provides a structured reasoning process for defining contextual safety risks and corresponding constraints without referencing specific semantics in the environment. It is structured into four parts: a role summary, a semantic class definition, a description of spatial operators, and the output format.

The role summary instructs the VLM to perform contextual safety reasoning for a small mobile robot. The semantic class definition outlines five categories of objects and terrain classes relevant for contextual safety reasoning:
\begin{enumerate}
    \item [1.] Metric obstacles: physical obstacles that block the robot's path.
    \item [2.] Hazard indicators: entities that imply the surrounding area is dangerous.
    \item [3.] Socially restricted zones: entities that the robot should not approach closely.
    \item [4.] Semantic barriers: entities that indicate prohibited space.
    \item [5.] Navigable surfaces: surfaces designed and intended for travel.
\end{enumerate}

For each spatial operator, the prompt provides a definition, a description of the logic for applying that operator, and a safety verdict indicating when the operator should be invoked. Finally, the prompt instructs the VLM to provide output in the following JSON format:
\begin{tcolorbox}[colback=gray!3, colframe=black,left=1mm, right=1.5mm, top=1.5mm, bottom=1mm] \small
\begin{minted}[breaklines, breakanywhere]{json}
{
  "safety_logic": "Briefly justify safety decisions (e.g. why AROUND is needed for X)",
  "classes": "class_a, class_b, class_c",
  "unsafe_regions": "NEAR(class_a), AROUND(class_b), BETWEEN(class_c)",
  "safe_regions": "ON(class_d), BETWEEN(class_a)",
}
\end{minted}
\end{tcolorbox}
We provide the full prompt in Listing~\ref{listing:prompt}.

\subsection{Perception Uncertainty}

Here, we give the full proof of Theorem \ref{the_probabilistic_cbf_theorem}, and discuss the operationalization of the result into interpretable design decisions for guaranteeing the closed-loop robot behavior. 
\begin{proof}[Proof of Theorem \ref{the_probabilistic_cbf_theorem}]

    Restating from Sect. \ref{sec:method}: we assume the existence of a known measurement function model $\underline{m}(r)$ that is conservative with respect to $1-\gamma$ of the contextual scenarios which it will encounter, along with an initial safe radius $R > 0$. The intuition of the proof is two-fold: we demonstrate by induction that assimilation of additional measurements monotonically decreases the expectation of the inverse regularized distance. This can be thought of as ensuring the absence of `transient effects' in the data assimilation. Then, we utilize Ville's Inequality and the theory of nonnegative supermartingales (NSMs) to give a `steady-state' condition for safe probabilistic traversal. 

    We first consider the expectation over the initial value of the inverse regularized distance function, for any initial distribution over nearest undetected unsafe regions subject to the safe initial radius constraint. By the monotonicity of $d^{-1}(\cdot)$, it can easily be verified that $r \geq R > 0 \text{ w.p. }1 \iff 0 < d^{-1}(r;c, \ell) \leq \frac{c}{R + \ell} \text{ w.p. } 1$. For a bounded, nonnegative random variable, the expectation (pre-measurement) is bounded by the supremum of the support: 
    \begin{equation*}
        \mathbb{E}[d^{-1}(r; c, \ell)_{t=0^-}] \leq \frac{c}{R + \ell}. 
    \end{equation*}
    We use this initial discussion to emphasize that though the system is assumed certain to start in a non-intersecting configuration with an unsafe region, it may not (pre-measurement) satisfy the conditions of Eq. \ref{the_theorem_equation}, which are more stringent than a safe initial pose. 
    
    Now, we make a natural and non-restrictive simplifying assumption, which enforces the intuition that any physical sensor has a limited sensing range. 
    \begin{assumption}[Bounded Maximal Sensing Radius]
        \label{bounded_radius_assumption}
        There exists $D \gg R > 0$ such that, for all $r \geq D$, $\underline{m}(r) = 0$. 
    \end{assumption}
    This is non-restrictive in that, even if there exists an $\underline{m}^*(r)$ for which it is not true, the choice 
    \begin{equation*}
        \underline{m}^*_D(r) := 1[r < D]\underline{m}^*(r)
    \end{equation*}
    is, by definition, a conservative approximation. Similarly with the following technical assumption:
    \begin{assumption}[Non-Boundary Detection Function]
        \label{asymptotic_radius_assumption}
        For all $r < D$, $0 < \underline{m}_D(r) < 1$. 
    \end{assumption}
    In principle, we do not allow for the detection probability to ever equal $1$, though it may be allowed arbitrarily close (especially as $r \downarrow 0$). This is again a conservative approximation which numerically regularizes subsequent analysis. Similarly, if $\underline{m}_D(r) = 0$ for some $r' < D$, one may again conservatively approximate the true detection by setting $D \gets r'$. 
    Now, we truncate the domain of Eq. \ref{expected_inverse_distance_versus_r_equation} to $D$ using Assumption \ref{bounded_radius_assumption}, and show that the upper bound of the expected inverse distance grows monotonically (towards $D$) as $k$ increases. Using the Leibniz rule and noting that the bounds are independent of the variable $k$, we observe that, for any prior distribution over current radius to undetected obstacles, we have:
    \begin{equation}
        \begin{split}
            & \frac{d}{dk} \mathbb{E}[d^{-1}(r; c ,\ell)\{k\}] = \frac{d}{dk}\int_{0}^D \frac{c(1-\underline{m}_D(r))^k}{(r+\ell)Z(k, \underline{m}_D)}p_0(r)dr \\
            & = \int_0^D \frac{c(1-\underline{m}_D)^{k} (Z\ln{(1-\underline{m}_D(r)) - \nabla_kZ})}{(r+\ell)Z^2}p_0(r) dr
        \end{split}
    \end{equation}
    Noting that $Z$ is defined precisely to ensure that the expectation is taken with respect to a probability measure, it is apparent that 
    \begin{equation*}
        Z(k, \underline{m}_D(r)) := \int_{0}^D (1-\underline{m}_D(r))^k p_0(r)dr, 
    \end{equation*}
    and as a consequence:
    \begin{equation*}
        \nabla_k Z(k, \underline{m}_D(r)) = \int_0^D \ln{(1-\underline{m}_D(r))}(1-\underline{m}_D(r))^{k}p_0(r)dr.
    \end{equation*}
    This allows for useful simplification: 
    \begin{equation}
        \frac{\nabla_k Z}{Z} = \mathbb{E}[\ln{(1-\underline{m}_D(r))}].
    \end{equation}
    \begin{assumption}[Monotonicity of $\underline{m}_D(r)$]
    \label{monotonicity_assumption}
        We assume (again, non-restrictively) that $\frac{d}{dr} \underline{m}_D(r) < 0$.
    \end{assumption}
    This is again non-restrictive, because for any $\underline{m}_D(r)$ which does not satisfy this property, one can always construct a conservative approximation which satisfies it by taking a pointwise minimum. Separating the preceding equation, we have: 
    \begin{equation*}
        \begin{split}
            & \frac{d}{dk} \mathbb{E}[d^{-1}(r; c ,\ell)\{k\}] = \frac{d}{dk}\int_{0}^D \frac{c(1-\underline{m}_D(r))^k}{(r+\ell)Z(k, \underline{m}_D)}p_0(r)dr \\
            & = \int_0^D \frac{c\ln{(1-\underline{m}_D(r))(1-\underline{m}_D(r))^kp_0(r)}}{(r+\ell)Z}dr + \ldots \\
            & \ldots + \int_0^D \frac{-c \mathbb{E}[\ln{(1-\underline{m}_D(r))}](1-\underline{m}_D(r))^kp_0(r)}{(r+\ell)Z}dr \\
            & = \mathbb{E}\bigg[\big(\frac{c}{r+\ell}\big)\big(\ln{(1-\underline{m}_D(r))}\big) \bigg] - \ldots \\
            & \hspace{10mm} \ldots - \mathbb{E}\bigg[\big(\frac{c}{r+\ell}\big)\mathbb{E}\big[\ln{(1-\underline{m}_D(r))}\big] \bigg] \\
            & =  \mathbb{E}\bigg[\big(\frac{c}{r+\ell}\big)\big(\ln{(1-\underline{m}_D(r))}\big) \bigg] - \ldots \\
            & \hspace{10mm} \ldots - \mathbb{E}\bigg[\big(\frac{c}{r+\ell}\big)\bigg]\mathbb{E}\bigg[\ln{(1-\underline{m}_D(r))}\bigg] \\
            & = \text{Cov}(F(r), G(r)) \\
            & \leq 0.
        \end{split}
    \end{equation*}
    We have denoted in this simplification $F(r) = \frac{c}{r+\ell}$ and $G(r) = \ln{(1-\underline{m}_D(r))}$. The last line follows from the definition of covariance for monotone increasing functions and the identity that $\text{Cov}(-F, G) = -\text{Cov}(F, G)$. Specifically, we have that $G(r)$ and $-F(r)$ are (strictly) monotone increasing in $r$, and therefore have (positive) nonnegative covariance. From this, it follows directly to get the desired result. 

    Having demonstrated monotonic decrease as $\{k\}$ (the number of measurements taken at a given position) increases, we now consider a controller which effectuates the rule specified in Eq. \ref{the_theorem_equation}. Choosing this $k$ appropriately, we consider the stochastic process $\{D_t\}$ defined as 
    \begin{equation}
        D_t = \frac{c}{r^*_t + \ell},
    \end{equation}
    where $r^*_t$ denotes the true current distance to the nearest undetected obstacle. Note that this represents the true, `in-the-world' realization of undetected obstacles, whereas Eq. \ref{the_theorem_equation} considers the expectation of $D_t$. By construction, $D_t$ is nonnegative with probability $1$. Therefore, we can use Ville's Inequality to bound the right-quantiles of $D_t$ with respect to its expectation. In particular, for a general stochastic process of the form $\{X_t\}_{t \geq 0}$, the inequality states that, if the process expectation is contracting with respect to the filtration -- that is, $\forall t \text{ }\mathbb{E}[D_t \rvert \mathcal{F}_{t-1}] \leq D_{t-1}$-- then:\footnote{We do not necessarily guarantee this in the case that $D_t < c(\delta-\gamma)/\ell$. Under the controllers defined in the optimization, this is satisfied. In some cases, though, those controllers are conservative, as one may allow $\mathbb{E}[D_t]$ to grow w.r.t. $D_{t-1}$ if $D_{t-1} \ll c(\delta-\gamma)/\ell$. In that case, this generalizes to what is known as an $E$-process, which also satisfies Ville's Inequality.} 
    \begin{equation*}
        \mathbb{P}\bigg[\exists t \in \mathbb{N} : X_t \geq \frac{X_0}{\alpha} \bigg] \leq \alpha \text{, } \forall \alpha \in (0, 1).
    \end{equation*}
    Thus, applying the inequality to the structure of $D_t$ but reasoning in reverse (i.e., in \emph{shaping} the process), we observe that enforcing the NSM property with expectation held at $c(\delta-\gamma)/\ell$ implies a strong guarantee:
    \begin{equation*}
        \mathbb{E}[D_t] \leq \frac{c(\delta-\gamma)}{\ell} \text{ } \forall t \implies \mathbb{P}[\exists t : D_t \geq \frac{c(\delta-\gamma)}{\ell (\delta-\gamma)}] \leq (\delta -\gamma).  
    \end{equation*}
    The condition $D_t \geq c/\ell$ is precisely the condition of entering an unsafe region. Therefore, the probability of entering an unsafe region is bounded at probability $\delta - \gamma$, for the $1-\gamma$ fraction of cases in which the measurement function is valid. 

    A union bound with the $\gamma$ error probability of the measurement function completes the result. 
\end{proof}

\section{Experiments}
\label{sec:appendix_experiments}

We now provide further implementation details, a description of experimental tasks, experimental analysis, and discuss how we derive safe traversal bounds from Theorem~\ref{the_probabilistic_cbf_theorem}.

\subsection{Implementation Details}
\label{sec:appendix_details}
This subsection provides implementation details not covered in the main text. We run the semantic grounding and contextual safety enforcement modules on an AMD Ryzen Threadripper 3960X with an Nvidia RTX 6000 GPU, and deploy VLMs on an Nvidia L40 GPU. The Contextual Safety Reasoning module uses the Ollama framework for model serving\footnote{\url{https://ollama.com/}}. We report the memory requirements of each model in Table~\ref{tab:vlm_memory}. 
Interesting, although Llava 34B has the most parameters, Qwen3-VL 30B has the largest memory consumption. 

The Semantic Grounding module uses a pinhole camera model for image projection. The simulated camera has focal length and principal point parameters $f_x = 274.9$, $f_y=376.7$, $c_x = 250.0$, $c_y = 160.0$. For real-world experiments, we use Spot's gripper camera, which has parameters $f_x=552.0$, $f_y=552.0$, $c_x=320.0$, $c_y=240.0$. Given pixel coordinates $u, v$ and corresponding depth $d$, we compute the 3D point relative to the camera origin $(x, y, z)$ as follows:
\begin{align}
    x = \frac{d (u - c_x)}{f_x}, \quad y = \frac{d (v - c_y)}{f_y}, \quad z = d.
\end{align}

Obstacles close to the robot obstruct its field of view, preventing the VLM from receiving full scene context and increasing the likelihood of incorrect safety predictions. We therefore discard measurements within 3 meters of the robot. We also discard measurements beyond 7 meters, as we found depth estimates degraded past this range. Future work may address this limitation by fusing images from multiple sensors or using wide-angle lenses.

We report the latency of CORE's modules in Table~\ref{tab:module_latency}. The latency of the Semantic Grounding module depends on the number of predicates it must enforce. As the number of required predicates varies across contexts, we observe a relatively high standard deviation (1.14s).

\begin{table}[]
    \centering
    \begin{tabular}{cc}
    \toprule
    Module & Latency (s) \\
    \midrule
        Safety Reasoning & 4.37 $\pm$ 0.45  \\
        Semantic Grounding  &  0.75 $\pm$ 1.14 \\
        Safety Enforcement & 0.04 $\pm$ 0.15  \\ 
        \bottomrule \\
    \end{tabular}
    \caption{Module latency mean and standard deviation}
    \label{tab:module_latency}
\end{table}

\begin{table}[]
    \centering
    \begin{tabular}{cc}
    \toprule
    Module & Memory (GB) \\
    \midrule
        Gemma 3 12B & 10.4 \\
        Llava 13B & 11.3 \\
        Gemma 3 27B & 20.4 \\ 
        Qwen3-VL 30B & 24.3 \\ 
        Llava 34B & 21.4\\
        \bottomrule \\
    \end{tabular}
    \caption{VLM memory requiremnets}
    \label{tab:vlm_memory}
\end{table}

\subsection{Tasks}
\label{sec:appendix_tasks}

We evaluate CORE on twelve tasks, six designed to be unsafe and six designed to be safe. Each task was repeated five times with randomized starting positions.

\begin{enumerate}
    \item[1.] (Unsafe): The controller attempts to guide the robot into a prohibited area in a warehouse as indicated by a line of cones.
    \item[2.] (Unsafe): The controller attempts to guide the robot within 1 meter of an operating forklift in a warehouse.
    \item[3.] (Unsafe): The controller attempts to guide the robot into a potentially dangerous area indicated by a wet floor sign in a hospital.
    \item[4.] (Unsafe): The controller attempts to guide the robot close to a person in a hospital.
    \item[5.] (Unsafe): The controller attempts to guide the robot from a road to a sidewalk through a curb, bypassing the sidewalk ramp.
    \item[6.] (Unsafe): While navigating on a sidewalk, the controller attempts to cut through a grassy area.
    \item[7.] (Safe): The controller guides the robot through open space in a warehouse.
    \item[8.] (Safe): The controller guides the robot through a space-constrained aisle.
    \item[9.] (Safe): The controller guides the robot through a hospital hallway.
    \item[10.] (Safe): The controller guides the robot through a hospital waiting area.
    \item[11.] (Safe): The controller guides the robot through a shopping mall parking lot.
    \item[12.] (Safe): The controller guides the robot along a sidewalk in a residential area.
\end{enumerate}

\subsection{Additional Analysis}
\label{sec:appendix_analysis}
We provide additional details about nominal behavior and observed failure modes. As discussed in \S\ref{sec:experiments}, CORE fails when the VLM does not provide appropriate safety constraints. This occurred during one trial of Task 3, where the VLM correctly identified the trash can and table as metric obstacles but failed to identify the wet floor sign, despite this semantic hazard being visible in the VLM's input image. The output is shown below and illustrated in Fig.~\ref{fig:vlm_failiure_mode} (right):
\begin{tcolorbox}[colback=gray!3, colframe=black,left=1mm, right=1.5mm, top=1.5mm, bottom=1mm] \small
\begin{minted}[breaklines, breakanywhere]{text}
safety_logic: The floor is traversable. The trash can and table are solid obstacles requiring a collision buffer. The arrangement of the trash can and table does not create a barrier, so no BETWEEN is needed. AROUND is not needed as there are no hazards or social entities.,
classes: "floor, trash_can, table,
unsafe_regions: NEAR(trash_can), NEAR(table),
safe_regions: ON(floor)
\end{minted}
\end{tcolorbox}
The VLM behaved nominally during the other four trials, providing the correct constraints. An example output is shown below and illustrated in Fig.~\ref{fig:vlm_failiure_mode} (left):
% The VLM behaved nominally during the other four trials, providing the correct constraints. And example output is below and illustrated in Fig.~\ref{fig:vlm_failiure_mode} (right):
\begin{tcolorbox}[colback=gray!3, colframe=black,left=1mm, right=1.5mm, top=1.5mm, bottom=1mm] \small
\begin{minted}[breaklines, breakanywhere]{text}
safety_logic: The vending machine, table, and wet floor sign are all obstacles. The wet floor sign indicates a hazard in its vicinity, requiring an AROUND buffer. The floor is traversable.,
classes: vending_machine, wet_floor_sign, table, floor,
unsafe_regions: NEAR(vending_machine), NEAR(table), NEAR(wet_floor_sign), AROUND(wet_floor_sign),
safe_regions: "ON(floor)
\end{minted}
\end{tcolorbox}

\begin{figure}
    \centering
    \includegraphics[width=0.95\linewidth]{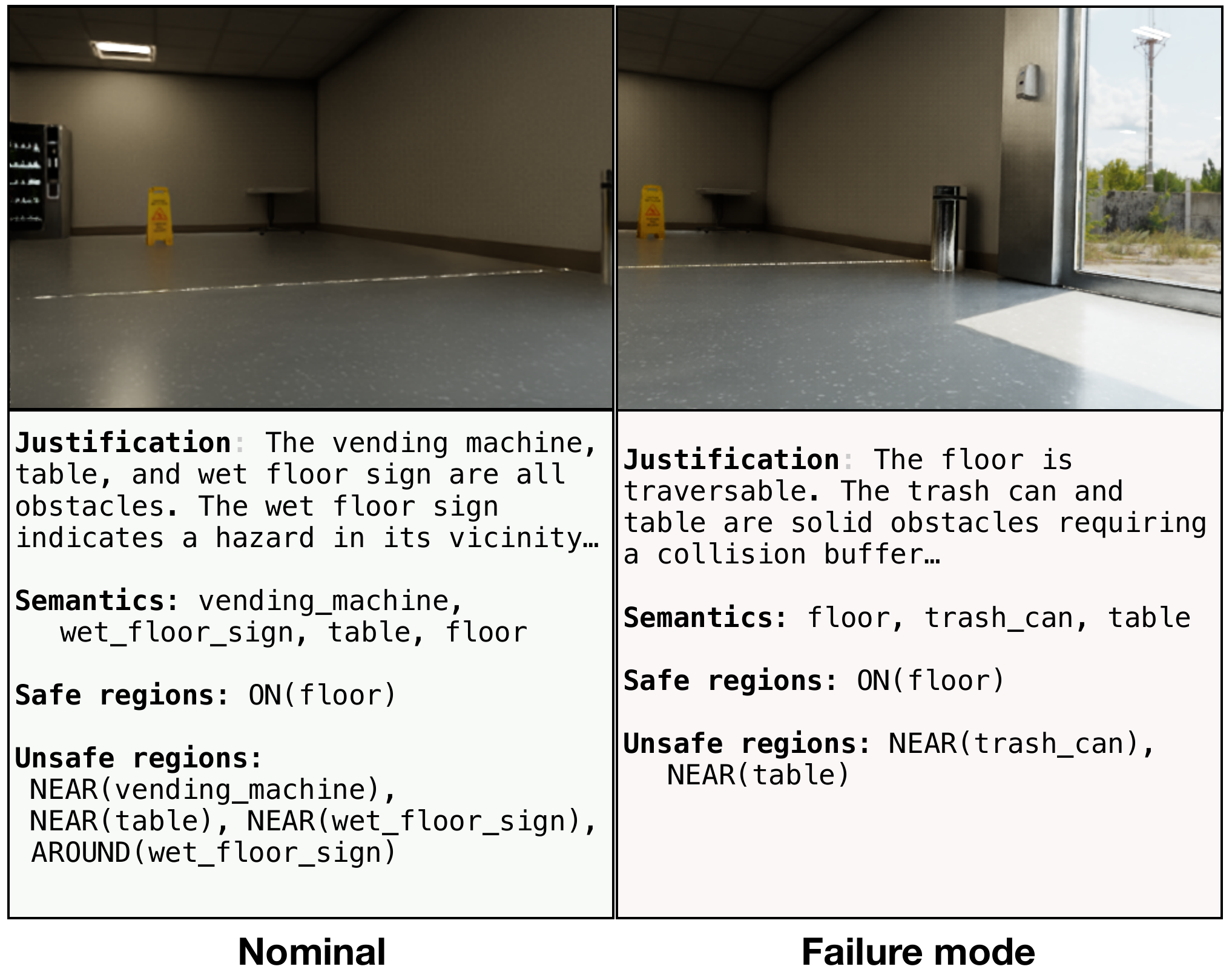}
    \caption{Example nominal behavior and failure mode. The VLM's input image is shown on top, with its response below. Left: failure mode where the VLM does not identify the wet floor sign, so the appropriate safety constraint is never grounded or enforced. Right: nominal behavior where the VLM correctly identifies the wet floor sign as a semantic hazard.}
    \label{fig:vlm_failiure_mode}
\end{figure}

\subsection{Perception Uncertainty Optimization Problem}
\label{sec:appendix_solving_perception_uncertainty}
We now discuss the problem of selecting parameters $c, \ell$ in order to achieve a bound at level $\delta - \gamma$. The key intuition is the non-dimensionalization of the physical space into the unit of `measurements before collision.' There are two different ways to operationalize this guarantee: the method posed in Theorem \ref{the_probabilistic_cbf_theorem} (which results in unnatural `stop-start' motion), or by considering the measurement function as determining a maximal safe speed. We present both, noting that the latter is the more practical. 

\subsubsection{Number of Observations Per Movement}
Here, we consider worst-case changes in the expected regularized inverse distance. We assume a known (perhaps conservative) model of $\underline{m}_D$. The resulting optimization problem is: 
\begin{equation*}
\label{perception_uncertainty_optimization}
    \begin{split}
        \kappa^*_{NOPM} & = \hspace{2mm} \min_{(k, c, \ell) \in \mathbb{N} \times \mathbb{R}_+^2} k \\
        \text{s.t. } & \int_{0}^{D} \frac{c (1-\underline{m}_D(r))^k}{(r+\ell)Z(k, \underline{m}_D(r))}p_0^*(r)dr \leq \frac{c(\delta - \gamma)}{\ell}.
    \end{split}
\end{equation*}
The interpretation of this guarantee (which is independent of time $t$) is that a policy that samples $\kappa^*_{NOPM}$ times independently per movement period is robust to a worst-case change in state ($p_0^*(r)$) over that period (which depends on the speed of the robot and latency of perception). The worst-case change in state is, intuitively, a `beeline' towards an obstacle at varying realized distances. That this enforces the supermartingale / E-process property follows directly from Sect. \ref{sec:method}. 
\subsubsection{Maximum Traversal Speed}
To optimize $k^*_t$, we must consider a worst-case instance of an obstacle at distance $D$, towards which the motion planner is assumed to move directly at maximum speed. Then, we need to solve the following optimization (can be done via binary search):
\begin{equation*}
\label{perception_uncertainty_optimization}
    \begin{split}
        \kappa^*_{MTS} & = \hspace{2mm} \min_{(k, c, \ell) \in \mathbb{N} \times \mathbb{R}_+^2} k \\
        \text{s.t. } \sum_{i=1}^{k-1} & \frac{c}{\ell + \frac{D(k-i)}{k})} \prod_{j=0}^{i-1} (1 - \underline{m}_D(\frac{D(k-j)}{k}))) \cdot \underline{m}_D(\frac{D(k-i)}{k})) \\
        & \hspace{10mm} \ldots + \frac{\mathcal{R}(\underline{m}_D)c}{\ell} \leq \frac{c(\delta - \gamma)}{\ell}.
    \end{split}
\end{equation*}
The implication is that the robot is then able to safely traverse the space at up to maximal speed $D / \kappa^*_{MTS}$. Intuitively, the optimization decomposes detection into discrete instances of data assimilation (i.e., when new measurements are obtained). The initial terms are the realized inverse regularized distances for the `beeline' scenario; the product sums encode the sequence of Bernoulli detection outcomes that result in detecting at exactly the $i^{th}$ step. $\mathcal{R}(\cdot)$ is a remainder term, interpretable as the probability of directly entering the unsafe region. By nonnegativity, a lower bound on the achievable $\delta - \gamma$ certificate is precisely $\mathcal{R}(\cdot)$. 

\subsubsection{Getting a Certificate}
We use the more natural second formulation to derive a certificate. To do this, we take the chosen navigation speed parameter and realized VLM latency constraints to fix a maximal feasible $\kappa^*_{MTS}$, here equal to $3$, based on $D=4$ meters, a maximum speed of $0.35$ms$^{-1}$, and a latency of approximately $3$ seconds. 

To choose a measurement function, we take a high-probability lower bound of the empirical unsafe detection calibration (nominally $85$\%); this bound is found to be $75$\%. Here, we make the (not conservative) assumption that the lower bound holds independently of the context (i.e., $\gamma = 0$); this can of course be robustified to a trivial certificate of $75$\% safe traversal (corresponding to the module failing $25$\% of the time and working perfectly -- that is, $\underline{m}_D(r) \approx 1[r < D]$ -- in the remaining contexts). This would amount to $\delta = \gamma = 0.25$. In our setting, we add an $\epsilon$-decrease term to enforce all of the assumptions, resulting in $\underline{m}_D(r) = 0.75 - (\epsilon r)/D$. 

The free variables are two-dimensional, and the problem can be solved with any blackbox optimization scheme. We are more interested in attaining specific levels of confidence, so we search for a certificate to $\delta = 0.1$ and $\gamma=0$. This is attained at $c = 1$, $\ell = 0.1$. The residual for this solution is small, suggesting low suboptimality.

\begin{listing*}
\label{listing:prompt}
\begin{tcolorbox}[colback=gray!3, colframe=black,left=1mm, right=1.5mm, top=1.5mm, bottom=1mm] \small
\begin{minted}[breaklines, breakanywhere]{text}
<instructions>
You are the Vision-Language Navigation Module for a mobile robot Your goal is to output safe planning data by analyzing the image for **Metric Obstacles**, **Semantic Barriers**, and **Environmental Hazards**.

### 1. SEMANTIC CLASS DEFINITION
Identify Objects in these categories:
- **Metric Obstacles:** physical entities that block space.
- **Hazard Indicators:** Objects that imply the *surrounding area* is dangerous.
- **Socially Restricted Zones:** Any entity (animate or inanimate) that commands a "buffer of respect."
- **Semantic Barriers:** Objects arranged to signal "Do Not Enter."
- **Navigable Surfaces:** Surfaces suitable or intended for travel.

### 2. SPATIAL OPERATOR RULES
You must output regions using these operators. ONLY list an operator if it is relevant for immeditate navigation and NOT in the distant horizon.
Follow the specific logic for each:

**A. The "Collision" Rule (for NEAR)**
- **NEAR(class):** Represents the immediate physical collision buffer.
- **Logic:** ALL solid objects are collision risks.
- **Verdict:** `NEAR` regions for all solid objects are **UNSAFE**.

**B. The "Buffer" Rule (for AROUND)**
- **AROUND(class):** Represents a semantic danger or social etiquette zone.
- **Logic:** 1. For **Hazard Indicators**: Use this for entities where the *vicinity* is dangerous, even if the robot doesn't touch the entity itself.
    2. For **Socially Restricted Zones**: The robot must not enter personal or otherwise socially unacceptable space.
- **Verdict:** `AROUND` regions for hazards and social entities are **UNSAFE**.

**C. The "Grouped Barrier" Rule (for BETWEEN)**
- **Between(class)** represents a prohibited area. Usage: only ONE class as argument.
- **Logic:** This rule takes precedence over AROUND when multiple hazard indicators form a pattern.
    - **UNSAFE:** If multiple hazard indicators are arranged in a **line, curve, or perimeter** to block a path. You must mark the gap `BETWEEN` them as a hazard.
    - **SAFE:** If objects are scattered without a blocking pattern, or if the gap is an intended portal.
- **Verdict:** Evaluate the **arrangement**, not just the object type.

**D. The "Surface" Rule (for ON)**
- **Logic:** Evaluate the functional intent of the surface.
- **Verdict:** **SAFE** only for **Navigable Surfafces**.
- **Verdict:** **UNSAFE** for **Non-Navigable Surfaces** or surface hazards.

### 3. OUTPUT FORMAT
Output a **Single Valid JSON Object**.

**STRICT FORMATTING RULES:**
- Return a flat JSON object.
- The values for `unsafe_regions` and `safe_regions` must be **Single Strings** containing comma-separated operators.
- **DO NOT** use arrays `[]` or nested objects `{}` inside the values.
- **CRITICAL:** Populate `"safety_logic"` first.

**Correct JSON Structure:**
{
  "safety_logic": "Briefly justify safety decisions (e.g. why AROUND is needed for X)",
  "classes": "class_a, class_b, class_c",
  "unsafe_regions": "NEAR(class_a), AROUND(class_b), BETWEEN(class_c)",
  "safe_regions": "ON(class_d), BETWEEN(class_a)",
}
</instructions>
\end{minted}
\end{tcolorbox}
\caption{VLM prompt outlining safety reasoning instructions}
\end{listing*}

\end{document}